\newcommand\ci{\perp\!\!\!\perp}
\newcommand\nci{\not\perp\!\!\!\perp}
\newenvironment{customdef}[1]{\innercustomdef}
  {\endinnercustomdef}
\begin{document}
\ArticleType{RESEARCH PAPER}
\Year{2025}
\Month{}
\Vol{}
\No{}
\DOI{}
\ArtNo{}
\ReceiveDate{}
\ReviseDate{}
\AcceptDate{}
\OnlineDate{}

\title{Learning by doing: an online causal reinforcement learning framework with causal-aware policy}{Learning by doing: an online causal reinforcement learning framework with causal-aware policy}

\author[1,2]{Ruichu Cai}{{cairuichu@gmail.com}}
\author[1]{Siyang Huang}{}
\author[1]{Jie Qiao}{}
\author[1]{Wei Chen}{}
\author[3]{Yan Zeng}{}
\author[4]{Keli Zhang}{}
\author[5]{\\Fuchun Sun}{}
\author[6]{Yang Yu}{}
\author[7]{Zhifeng Hao}{}

\AuthorMark{Ruichu Cai}

\AuthorCitation{Ruichu Cai, Siyang Huang, Jie Qiao, et al}

\address[1]{School of Computer Science, Guangdong University of Technology, Guangzhou {\rm 510006}, China}
\address[2]{Pazhou Laboratory (Huangpu), Guangzhou {\rm 510555}, China}
\address[3]{School of Mathematics and Statistics, Beijing Technology and Business University, Beijing {\rm 102401}, China}
\address[4]{Huawei Noah’s Ark Lab, Shenzhen {\rm 518116}, China}
\address[5]{Department of Computer Science and Technology, Tsinghua University, Beijing {\rm 100190}, China}
\address[6]{National Key Laboratory for Novel Software Technology, Nanjing University, Nanjing {\rm 210093}, China}
\address[7]{College of Science, Shantou University, Shantou {\rm 515063}, China}
\abstract{As a key component to intuitive cognition and reasoning solutions in human intelligence, causal knowledge provides great potential for reinforcement learning (RL) agents’ interpretability towards decision-making by helping reduce the searching space. However, there is still a considerable gap in discovering and incorporating causality into RL, which hinders the rapid development of causal RL.
In this paper, we consider explicitly modeling the generation process of states with the causal graphical model, based on which we augment the policy.
We formulate the causal structure updating into the RL interaction process with active intervention learning of the environment.
To optimize the derived objective, we propose a framework with theoretical performance guarantees that alternates between two steps: using interventions for causal structure learning during exploration and using the learned causal structure for policy guidance during exploitation.
Due to the lack of public benchmarks that allow direct intervention in the state space, we design the root cause localization task in our simulated fault alarm environment and then empirically show the effectiveness and robustness of the proposed method against state-of-the-art baselines.
Theoretical analysis shows that our performance improvement attributes to the virtuous cycle of causal-guided policy learning and causal structure learning, which aligns with our experimental results. Codes are available at https://github.com/DMIRLAB-Group/FaultAlarm\_RL.}

\keywords{causal reinforcement learning, reinforcement learning, causality, online reinforcement learning, causal structure learning}

\maketitle

\section{Introduction}

How to decide the next action in repairing the cascading failure under a complex dynamic online system?
Such a question refers to multifarious decision-making problems in which reinforcement learning (RL) has achieved notable success \cite{sutton2018reinforcement,kober2013reinforcement,silver2016mastering,shalev2016safe}.
However, most off-the-shelf RL methods contain a massive decision space and a black-box decision-making policy, thus usually suffering from low sampling efficiency, poor generalization, and lack of interpretability.
As such, current efforts~\cite{sun2021model,zhu2022offline} incorporate domain knowledge and causal structural information into RL to help reduce the searching space as well as improve the interpretability, e.g., a causal structure enables to locate the root cause guiding the policy decision. 
With the causal knowledge, recent RL approaches are mainly categorized as \textit{implicit} and \textit{explicit} modeling-based.

Implicit modeling-based approaches mostly ignore the detailed causal structure and only focus on extracting the task-invariant representations to improve the generalizability in unseen environments \cite{sontakke2021causal,zhang2020learning,tomar2021model,bica2021invariant,sodhani2022improving,wang2021task}. For instance,~\cite{zhang2020learning} proposed a method that extracted the reward-relevant representations while eliminating redundant information. In contrast, explicit modeling-based approaches seek to model the causal structure of the transition of the Markov Decision Process (MDP) \cite{DBLP:conf/nips/DingL0Z22,seitzer2021causal,huang2021adarl,huang2022action,wang2021provably,Luofeng2021,Sergei2020,AmyZhang2019}. For instance,~\cite{huang2022action} proposed a method to learn the causal structure among states and actions to reduce the redundancy in modeling while~\cite{DBLP:conf/nips/DingL0Z22} utilized the causal structure of MDP through a planning-based method. However, these explicit modeling methods either rely on the causal knowledge from domain experts or might suffer from low efficiency in learning policy due to the indirect usage of causal structure in planning and the possible inefficient randomness-driven exploration paradigm.

\begin{figure}[t]
    \centering     
    \includegraphics[width=0.47\textwidth]{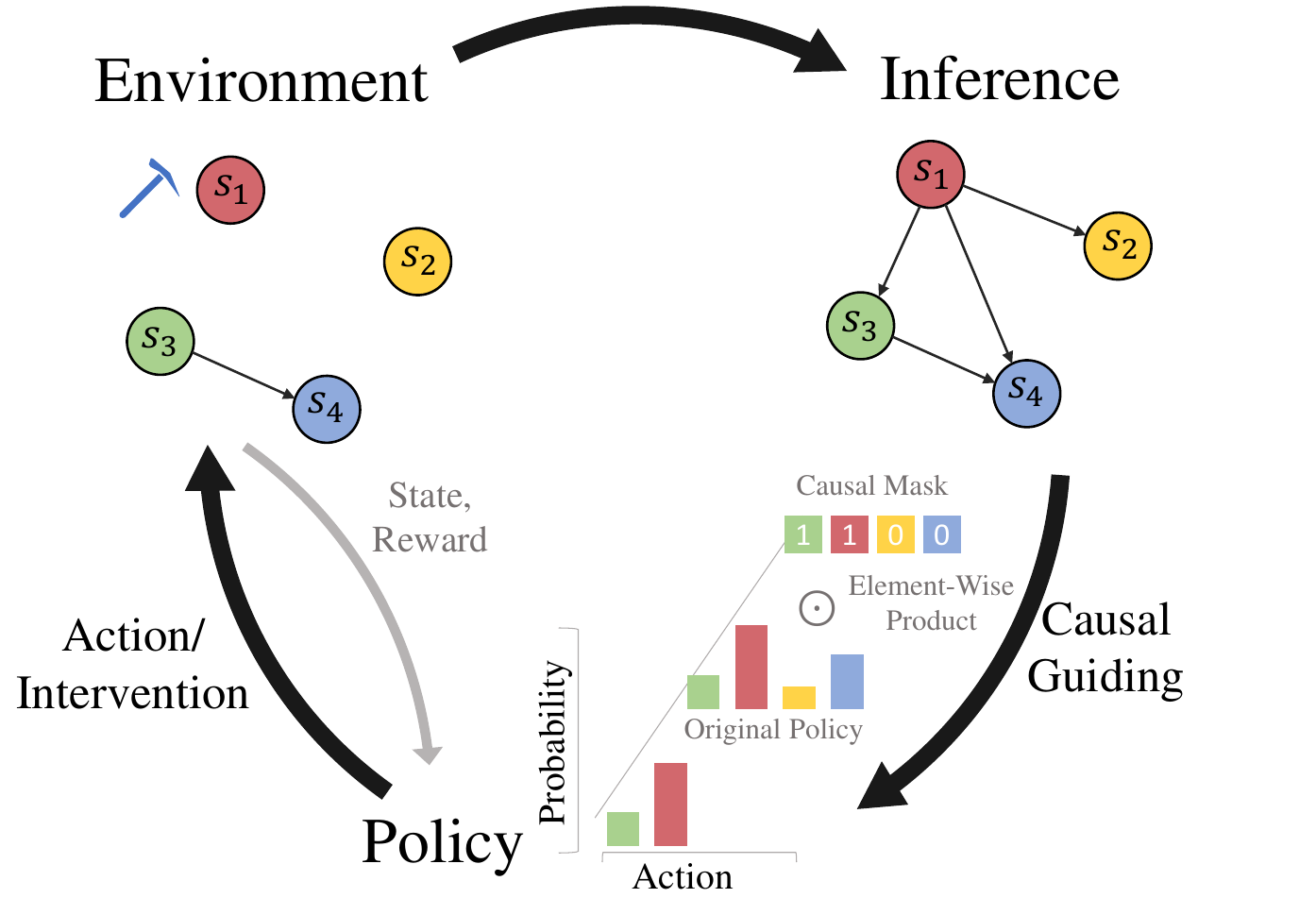}
    \caption{Intervention-Inference-Guidance loop of online causal reinforcement learning.\label{fig:intinfdes}}
\end{figure}

Inspired by the intervention from causality and the decision nature of RL actions in online reinforcement learning: a random action is equivalent to producing an intervention on a certain state such that only its descendants will change while its ancestors will not; a decision could be made according to the causal influence of the action to a certain goal. As such, a causal structure can be learned through interventions by detecting the changing states, which in turn guides a policy with the causal knowledge from the learned causal structure.
Although there has been recent interest in related subjects in causal reinforcement learning, most of them seek to learn a policy either with a fixed prior causal model or a learned but invariant one~\cite{Sergei2020,lee2021causal,huang2022action,DBLP:conf/nips/DingL0Z22}, which does not naturally fit our case when the causal model is dynamically updated iteratively via interventions while learning policy learning (i.e., learning by doing), along with the theoretical identifiability and performance guarantees. 

In this work, as shown in Figure \ref{fig:intinfdes}, we propose an online causal reinforcement learning framework that reframes RL's exploration and exploitation trade-off scheme. 
In exploration, we devise an inference strategy using intervention to efficiently learn the causal structure between states and actions, modeling simultaneously causal dynamics of the environment; while in exploitation, we take the best of the learned structure to develop a causal-knowledge-triggered mask, which leads to a  highly effective causal-aware policy. As such, the causal environment, the causal structure inference strategy, and the causal-aware policy construct a virtuous cycle to the online causal reinforcement learning framework.

In particular, our framework consists of causal structure learning and policy learning.
For causal structure learning, we start by explicitly modeling the environmental causal structure from the observed data as initial knowledge.
Then we formulate the causal structure updating into the RL interaction process with active intervention learning of the environment.
This novel formulation naturally utilizes post-interaction environmental feedback to assess treatment effects after applying the intervention, thus enabling correction and identification of causality.
For policy learning, we propose to construct the causal mask based on the learned causal structure, which helps directly reduce the decision space and thus improves sample efficiency. This leads to an optimization framework that alternates between causal discovery and policy learning to gain generalizability. 
Under some mild conditions, we prove the identifiability of the causal structure and the theoretical performance guarantee of the proposed framework.

To demonstrate the effectiveness of the proposed approach, we established a high-fidelity fault alarm simulation environment in the communication network in the Operations and Maintenance (O\&M) scenario, which requires powerful reasoning capability to learn policies. 
We conduct comprehensive experiments in such an environment, and the experimental results demonstrate that the agent with causal learning capability can learn the optimal policy faster than the state-of-the-art model-free RL algorithms, reduce the exploration risk, and improve the sampling efficiency. 
Additionally, the interaction feedback from the environment can help learn treatment effects and thus update and optimize causal structure more completely. 
Furthermore, our framework with causality can also be unified to different backbones of policy optimization algorithms and be easily applied to other real-world scenarios.

The main contributions are summarized as follows:
\begin{itemize}[leftmargin=0.3in]
\item We propose an online causal reinforcement learning framework, including causal structure and policy learning. It interactively constructs compact and interpretable causal knowledge via intervention (doing), in order to facilitate policy optimization (learning). 
\item We propose a causal structure learning method that automatically updates local causal structures by evaluating the treatment effects of interventions during agent-environment interactions. Based on the learned causal model, we also develop a causal-aware policy optimization method triggered by a causal mask.
\item We derive theoretical guarantees from aspects of both causality and RL: identifiability of the causal structure and performance guarantee of the iterative optimization on the convergence of policy that can be bounded by the causal structure.
\item We experimentally demonstrate that introducing causal structure during policy training can greatly reduce the action space, decrease exploration risk, and accelerate policy convergence.
\end{itemize}

\section{Related work}
\paragraph{Reinforcement learning.} RL solves sequential decision problems by trial and error, aiming to learn an optimal policy to maximize the expected cumulative rewards.
RL algorithms can be conventionally divided into model-free and model-based methods.
The key idea of the model-free method is that agents update the policy based on the experience gained from direct interactions with the environment. 
In practice, model-free methods are subdivided into value-based and policy-based ones. Value-based methods select the policy by estimating the value function, and representative algorithms include deep Q-network (DQN)~\cite{mnih2013playing}, deep deterministic policy gradient (DDPG)~\cite{lillicrap2015continuous}, and dueling double DQN (D3QN)~\cite{wang2016dueling}.
Policy-based methods directly learn the policy function without approximating the value function.
The current mainstream algorithms are proximal policy optimization (PPO)~\cite{schulman2017proximal}, trust region policy optimization (TRPO)~\cite{schulman2015trust}, A2C, A3C~\cite{mnih2016asynchronous} and SAC~\cite{haarnoja2018soft}, etc.
The model-free approach reaches a more accurate solution at the cost of larger trajectory sampling, while the model-based approach achieves better performance with fewer interactions~\cite{kaiser2019model,sutton1991dyna,janner2019trust,garcia1989model,luo2024survey}. 
Despite the better performance of the model-based approach, it is still more difficult to train the environment model, and the model-free approach is more general for real-world applications. In this paper, we apply our approach to the model-free methods.

\paragraph{Causal reinforcement learning.}
Causal RL~\cite{yanzeng2023,de2019causal,sonar2021invariant} is a research direction that combines causal learning with reinforcement learning. 
~\cite{huang2022action} proposed to extract relevant state representations based on the causal structure between partially observable variables to reduce the error of redundant information in decision-making. 
~\cite{sontakke2021causal} and 
~\cite{seitzer2021causal} discovered simple causal influences to improve the efficiency of reinforcement learning.
\cite{lu2020sample} and 
\cite{pitis2020counterfactual} proposed counterfactual-based data augmentation to improve the sample efficiency of RL.
Building dynamic models in model-based RL ~\cite{wang2022causal,sun2021model,zhu2022offline} based on causal graphs has also been widely studied recently. 
~\cite{sun2021model} leveraged the structural causal model as a compact way to encode the changeable modules across domains and applied them to model-based transfer learning. 
~\cite{zhu2022offline} proposed a causal world model for offline reinforcement learning that incorporated causal structure into neural network model learning.
Most of them utilize pre-defined or pre-learned causal graphs as prior knowledge or detect single-step causality to enhance the RL policy learning. However, none of them used the intervention data of the interaction process with the environment to automatically discover or update the complex causal graph. 
Our method introduces a self-renewal interventional mechanism for the causal graph based on causal effects, which ensures the accuracy of causal knowledge and greatly improves the strategy efficiency.
\paragraph{Causal discovery.}
Causal discovery aims to identify the causal relationships between variables. Typical causal discovery methods from observational data are constraint-based methods, score-based methods, and function-based methods. 
Constraint-based methods, such as PC and FCI algorithms \cite{spirtes2000causation}, rely on conditional independence tests to uncover an underlying causal structure. Different from constraint-based methods, Score-based methods use a score to determine the causal direction between variables of interest ~\cite{chickering2002optimal,ramsey2017million,huang2018generalized}. But both constraint-based methods and score-based methods suffer from the Markov Equivalence Class (MEC) problem, i.e., different causal structures imply the same conditional independence tests. By utilizing the data generation process assumptions, like linear non-Gaussian assumption \cite{shimizu2006linear} and the additive noise assumption \cite{hoyer2008nonlinear,peters2014causal,cai2018self}, function-based methods are able to solve the MEC problem and recover the entire causal structure. 

Furthermore, leveraging additional interventional information can provide valuable guidance for the process of causal discovery \cite{brouillard2020differentiable,tigas2022interventions}. An intuitive concept involves observing changes in variables following an intervention on another variable. If intervening in one variable leads to changes in other variables, it suggests a potential causal relationship between the intervened variable and the variables that changed.

\section{Problem formulation}
In this section, we majorly give our model assumption and relevant definitions to formalize the problem.
We concern the RL environment with a Markov Decision Process (MDP) $\left\langle\mathcal{S},\mathcal{A},p,r,\gamma \right\rangle$, where $\mathcal{S}$ denotes the state space, $\mathcal{A}$ denotes the action space, $p(\mathbf{s}'|\mathbf{s},a)$ denotes the dynamic transition from state $\mathbf{s}\in \mathcal{S}$ to the next state $\mathbf{s}'$ when performing action $a\in \mathcal{A}$ in state $\mathbf{s}$, $r$ is a reward function with $r(\mathbf{s},a)$ denoting the reward received by taking action $a$ in state $\mathbf{s}$ and $\gamma\in[0,1]$ is a discount factor. 

To formally investigate the causality in online RL, we make the following factorization state space assumption:
\begin{assumption}[Factorization state space] \label{ass:factorization}
The state variables in the state space $\mathcal{S} =\{s_{1} \times s_{2} \times \dots \times s_{|\mathcal{S}|}\}$ can be decomposed into disjoint components $\{s_{i}\}_{i=1}^{|\mathcal{S}|}$.
\end{assumption}
Assumption~\ref{ass:factorization} implies that the factorization state space has explicit semantics on each state component and thus the causal relationship among states can be well defined. Such an assumption can be satisfied through an abstraction of states which has been extensively studied \cite{tomar2021model,abel2022theory}.

\begin{figure*}[t]
\centering
~~~~~~~~~~~~~~~~~~~~~\subfloat[Full time causal graph in Markov decision process.]{\label{fig:framework_a}
\includegraphics[width=0.35\textwidth]{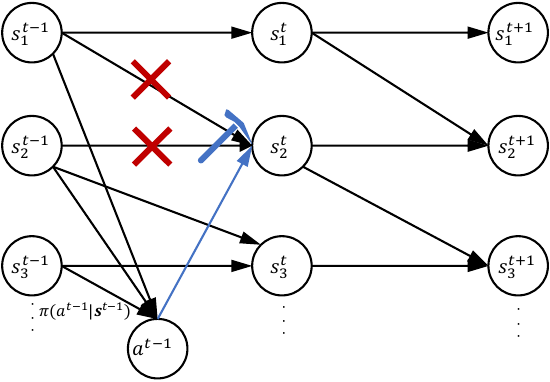}}
~~~~~~~
\begin{minipage}[b]{0.54\textwidth}
\subfloat[Causal graph $\mathcal{G}$ of states in cascade error scenario.]{\label{fig:framework_b}
\includegraphics[width=0.7\textwidth]{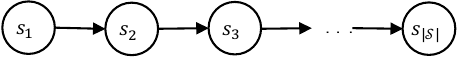}
}

\subfloat[Action on $s_1$ under the causal mask.]{\label{fig:framework_c}
\includegraphics[width=0.31\textwidth]{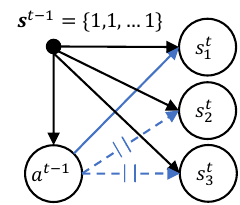}
}
\hspace{0.5cm}
\subfloat[Action on $s_2$ under the causal mask.]{\label{fig:framework_d}
\includegraphics[width=0.31\textwidth]{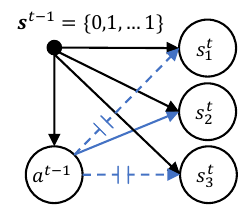}
}
\end{minipage}
    \caption{Illustration of online causal reinforcement learning framework. (a): A full-time causal graph in MDP and the action on the state can be viewed as an intervention. (b) The summary causal graph of (a) where each state would trigger the next state's occurrence, resulting in a cascade error. (c,d): The action from the policy depends on a given situation $S^{t-1}$ as well as the causal mask.}
    \label{fig:framework}
\end{figure*}

\subsection{Causal graphical models and causal reasoning}\label{sec:3.1}
Considering that causality implies the underlying physical mechanism, we can formulate the one-step Markov decision process with the causal graphical model\footnote{Generally, in causality, a directed acyclic graph that represents a causal structure is termed a causal graph~\cite{spirtes2000causation}. Here we generalize each state variable at a timestep $t$ as one variable of interest.}~\cite{peters2017elements} as follows:
\begin{definition}[Causal graph on Markov decision process] \label{def:graph}
Let $\mathcal{G}=(V_\mathcal{S},E)$ denote the causal graph where $V_\mathcal{S}$ is the vertex set defined on the state space, and the edge set $E$ represents the causal relationships among vertex. Given the total time span $[1,2,\dots,T]$, the causal relationship on the one-step transition dynamics can be represented through the factored probability:
\begin{equation}\label{eq:mdp}
p(s_1^t,s_2^t,\dots,s_{|\mathcal{S}|}^t|s_1^{t-1},s_2^{t-1},\dots,s_{|\mathcal{S}|}^{t-1})=\prod_{i=1}^{|\mathcal{S}|}p(s^t_i|\mathbf{s}_{\mathbf{Pa}_i}^{t-1}),
\end{equation}
where $|\mathcal{S}|$ is the support of the state space, $\mathbf{Pa}_i\coloneqq \{s_j|s_j\to s_i \in E\}$ denotes the parent set of $s_i$ according to causal graph $\mathcal{G}$, and $\mathbf{s}_{\mathbf{Pa}_i}^{t-1}$ is the parent states from the last time step.
\end{definition}
To establish a rigorous framework for causal reasoning in MDPs, we introduce the following assumptions, which generalize classical causal assumptions to the temporal domain:
\begin{assumption}[Causal Markov assumption in MDP]
    A causal graph $\mathcal{G}=\{V_{\mathcal{S}},E\}$ and a probability transition distribution $p(\mathbf{s}_i^t|\mathbf{s}^{t-1}_{\mathbf{Pa}_i^{\mathcal{G}}})$ satisfy the Markov condition if and only if for every $s_i^t$ state, $s_i^t$ is independent of $\mathbf{s}_i^{1:T}\setminus \{\mathbf{s}_{\mathbf{Des}^{\mathcal{G}}_i}^{t+1:T}\cup \mathbf{s}_{\mathbf{Pa}^{\mathcal{G}}_i}^{t-1}\}$ given $\mathbf{s}_{\mathbf{Pa}^{\mathcal{G}}_i}^{t-1}$ for all $t$ in MDP, where $\mathbf{s}_i^{1:T}$ denote the set of state variables $i$ from time $1$ to $T$, and $\mathbf{s}_{\mathbf{Des}^{\mathcal{G}}_i}^{t+1:T}$ denotes the descendant of $s_i$ from time $t+1$ to $T$.
\end{assumption}
\begin{assumption}[Causal faithfulness assumption in MDP]
    Let $\mathcal{G}=\{V_{\mathcal{S}},E\}$ be a causal graph and $p(\mathbf{s}_i^t|\mathbf{s}^{t-1}_{\mathbf{Pa}_i^{\mathcal{G}}})$ a transition distribution generated by $\mathcal{G}$. $\langle\mathcal{G}, p\rangle$ satisfies the faithfulness condition if and only if every conditional independence relation true in $p$ is entailed by the causal Markov condition applied to $\mathcal{G}$ at any time in MDP.
\end{assumption}
\begin{assumption}[Causal sufficiency assumption in MDP]
    A set of state variables $V_{\mathcal{S}}$ in $\mathcal{G}$ is causally sufficient if and only if there are no latent confounders of any two observed state variables at any time in MDP.
\end{assumption}
These assumptions are just the generalized version of the original one in the time domain such that causal structure is defined between the last time and the current time using independence. With these assumptions, we can develop the identifiability results for learning causal graph in MDP.

An example of such a causal graph in MDP is given in Figure \ref{fig:framework_a}. In our framework, actions are modeled as interventions, which inherently influence the state. To capture this, we explicitly consider the impact of each action on the state. Without loss of generality, we can model the action on each state as a binary treatment $I_i\in \{0,1\}$ for state $s_i$, where $I_i=0$ indicates the state receives no intervention (natural evolution), and $I_i=1$ indicates the state receives the treatment (treated) under which an intervention is performed. For example, $I_2=1$ at time $t$ in Fig. \ref{fig:framework_a} means that there is an intervention $\operatorname{do}(s_2)$ on $s_2^t$ such that the effect of all parents on $s_2^t$ is removed. Such action modeling is commonly encountered in many scenarios like network operation, robot control, etc. In such a case, we have $p(\operatorname{do}(s_2^t)|\mathbf{s}_{\mathbf{Pa}_2}^{t-1})=p(\operatorname{do}(s_2^t))$ \cite{pearl2009causality}. 
As such, the policy serves as the treatment assignment for each state, and the action space is structured such that each dimension corresponds to a binary intervention on a specific state variable (i.e., $I_i$ for $s_i$). This design ensures that the action space spans the same dimensions as the state space: every state variable has an associated intervention ``lever'' in the action space so that we can intervene the state and measure the effect of certain outcomes. This allows us to learn the causal influence within each state, which will further improve policy learning by selecting the most influenced action to the goal. While we assume full intervention capability across all state dimensions for simplicity, this framework readily extends to scenarios where certain states remain non-intervenable by omitting their corresponding action dimensions. Based on Definition~\ref{def:graph}, we can define the average treatment effect among states.
\begin{definition}[Average Treatment Effect (ATE) on states]\label{eq:causal_effect}
    Let $s_i$ and $s_j$ denote two different state variables. Then the treatment effect of $s_i$ on $s_j$ is, 
    \begin{equation}
        \begin{aligned}
            \mathcal{C}_{s_i\to s_j}=\mathbb{E}[s_j(I_i=1)-s_j(I_i=0)],
        \end{aligned}
    \end{equation}
    where $s_j(I_i=1)$ denotes the potential outcome of $s_j$ if $s_i$ were treated (intervened), $s_j(I_i=0)$ denotes the potential outcome if $s_i$ were not treated \cite{rosenbaum1983central}.
\end{definition}
Intuitively, the potential outcome depicts the outcome of the state in performing different treatments and the ATE evaluates the treatment effect on the outcome. That is, ATE answers the question that when an agent performs an action $\operatorname{do}(s_i)$, how is the average cause of an outcome of $s_j$ \cite{pearl2009causality}? Such a question suggests that an action applied to a state will solely influence its descendants and not its ancestors. This aspect is crucial for causal discovery, as it reveals the causal order among the states. Moreover, the treatment is not necessarily binary since our goal is to infer the causal order by the property of intervention in action, i.e., an intervention on the cause will influence its effect, which is also held in multi-treatment \cite{lopez2017estimation} or the continue-treatment \cite{callaway2024difference}. One can simply modify the corresponding ATE to adapt to the general treatment. For simplicity, we assume binary treatment in this work. To further accomplish the causal discovery, we assume that the states satisfy the causal sufficiency assumption \cite{peters2017elements}, i.e., there are no hidden confounders and all variables are observable.

\section{Framework}
In this section, with proper definitions and assumptions, we first propose a general online causal reinforcement learning framework, which consists of two phases: policy learning and causal structure learning.
Then, we describe these two phases in detail and provide a performance guarantee for them.
The overall flow of our framework is eventually summarized in Algorithm \ref{algo}.

\subsection{Causal-aware policy learning}
The general objective of RL is to maximize the expected cumulative reward by learning an optimal policy $\max_{\pi}\mathbb {E}\left[\sum_{t=0}^{T}\gamma^t r(\mathbf{s}^t,a^t)\right]$.
Inspired by viewing the action as the intervention on state variables, we use the fact that the causal structure $\mathcal{G}$ among state variables is effective in improving the policy decision space, proposing the causal-aware policy $\pi_{\mathcal{G}}(\cdot|\mathbf{s})$ with the following objective function for optimization:
\begin{equation}
    \max_{\pi_{\mathcal{G}}}\mathbb {E}\left[\sum_{t=0}^{T}\gamma^t r(\mathbf{s}^t,a^t)\right].
\end{equation}

Let us consider a simple case where we have already obtained a causal graph $\mathcal{G}$ of the state-action space.
We now define a causal policy and associate it with the state-space causal structure $\mathcal{G}$:
\begin{definition}[Causal policy]\label{def:causal_policy}
    Given a causal graph $\mathcal{G}$ on the state space, we define the causal policy $\pi_{\mathcal{G}}(\cdot|\mathbf{s})$ under the causal graph $\mathcal{G}$ as follows:
    \begin{equation}
    \pi_{\mathcal{G}}(\cdot|\mathbf{s})=M_{\mathbf{s}}(\mathcal{G})\circ\pi(\cdot|\mathbf{s}),
    \end{equation}
    where $M_{\mathbf{s}}(\mathcal{G})$ is the causal mask vector at state $\mathbf{s}$ w.r.t. $\mathcal{G}$, $\pi(\cdot|\mathbf{s})$ is the action probability distribution, and $\pi_{\mathcal{G}}(\cdot|\mathbf{s})$ is the distribution of causal policy where each action is masked according to $M_{\mathbf{s}}(\mathcal{G})$.
\end{definition}
 The causal mask $M_{\mathbf{s}}(\mathcal{G})=\{m_{\mathbf{s},a}^{\mathcal{G}}\}_{a=1}^{|\mathcal{A}|}$ is induced by the causal structure and the current state, aiming to pick out causes of the state and refine the searching space of policy. In other words, it ensures that all irrelevant actions can be masked out.
For example, in a cascade error scenario of communication in Fig. \ref{fig:framework_b}, where each state  (e.g., system fault alarm) would trigger the next state's occurrence, resulting in cascade and catastrophic errors in communication networks, 
the goal here is to learn a policy that can quickly eliminate system fault alarms. The most effective and reasonable solution is to intervene on the root cause of the state, to prevent possible cascade errors.
In Fig. \ref{fig:framework_b}, we should intervene on $s_2$ since $s_1$ is not an error and $s_2$ is the root cause of the system on its current state. 

For more general cases, based on the causal structure of errors, we can obtain the $TopK$ causal order representing $K$ possible root-cause errors and construct the causal mask vector to refine the decision space to a subset of potential root-cause errors. This is, the $i$-th element in $M_{\mathbf{s}}(\mathcal{G})$ is not masked ($m_{\mathbf{s},i}^{\mathcal{G}}=0$) only if $s_{i} \in TopK_{\tilde{\mathcal{G}}}$ where $TopK_{\tilde{\mathcal{G}}}$ is the $TopK$ causal order of $\tilde{\mathcal{G}}$, and $\tilde{\mathcal{G}} \coloneqq \mathcal{G} \setminus \left\{s_{i} |s_{i}^{t} =0\right\}$, $K$ denotes the number of candidate causal actions.
It is worth mentioning that different tasks correspond to different causal masks, but the essential role of the causal mask is to use causal knowledge to retain task-related actions and remove task-irrelevant actions, thus helping the policy to reduce unnecessary sampling. For example, for some goal $Y$, the causal mask can be set to $m_{\mathbf{s},i}^{\mathcal{G}} \propto |\mathcal{C}_{i\to y}|$ which is proportional to the causal effect where $\mathcal{C}_{i\to y}=\mathbb{E}[Y(I_i=1)-Y(I_i=0)]$ so that the causal mask can be task-specific for different goal.
Note that some relevant causal imitation learning algorithms exist that utilize similar mask strategies~\cite {de2019causal,samsami2021causal}. However, they focus on imitation learning settings other than reinforcement learning. And they use the causal structure accurately while we take the best of causal order information, allowing the presence of transitory incomplete causal structures in iterations and improving computational efficiency.

In practice, we use an actor-critic algorithm PPO~\cite{schulman2017proximal} as the original policy, which selects the best action via maximizing the Q value function $Q(\mathbf{s}^t,a^t)$.
Notice that our method is general enough to be integrated with any other RL algorithms.

\begin{algorithm}[t]
\small
\caption{Online causal reinforcement learning training process}
\label{algo}
\KwIn{Policy network $\theta$; Replay buffer $\mathcal{B}$; Causal structure $\mathcal{G}$}
\While{$\theta$ not converged}
{
\tcp{Causal-aware policy learning} 
\While{$t < T$}
{
    
    $a^t \leftarrow$ Causal policy $\pi_{\mathcal{G}}(\cdot |\mathbf{s}^t)$ with causal mask $M_{\mathbf{s}^t}(\mathcal{G})$
    
    $\mathbf{s}^{t+1}$, $r^t \leftarrow$ Env($\mathbf{s}^t,a^t$) 
    
    $\mathcal{B} \leftarrow \mathcal{B}\cup\{a^t, \mathbf{s}^t, r^t, \mathbf{s}^{t+1} \} $
}
\tcp{Causal structure learning} 
\For{$i \leq |\mathcal{S}|$}
{
    \For{$j \leq |\mathcal{S}|$}
    {
        Estimate $\hat{\mathcal{C}}^{Att}_{s_{i}\to s_{j}}$ from $\mathcal{B}$ 

        Infer the causal relation between $s_{i}, s_{j}$ based on $\hat{\mathcal{C}}^{Att}_{s_{i}\to s_{j}}$ (Theorem \ref{thm:causal order}).
    }
}
Prune redundant edges of $\mathcal{G}$

Update $\theta$ with $\mathcal{B}$
}
\end{algorithm}
\subsection{Causal structure learning}

In this phase, we relax the assumption of giving $\mathcal{G}$ as a prior and aim to learn the causal structure through the online RL interaction process.
As discussed before, an action is to impose a treatment and perform an intervention on the state affecting only its descendants while not its ancestors. As such, we develop a two-stage approach for learning causal structure with orientation and pruning stages. 

In the orientation stage, we aim to estimate the treatment effect for each pair to identify the causal order of each state.
However, due to the counterfactual characteristics in the potential outcome~\cite{pearl2009causality}, i.e., we can not observe both control and treatment happen at the same time, and thus a proper approximation must be developed. In this work, instead of estimating ATE, we propose to estimate the \textit{Average Treatment effect for the Treated sample} (ATT) \cite{Susan_ATT}:
\begin{equation}
\!\!\!\hat{\mathcal{C}}^{Att}_{s_i\to s_j}=\frac{1}{n}\!\!\!\sum_{\{k:I_i=1\}}\!\!\![s_j^{(k)}(I_i=1)-\hat{s}_j^{(k)}(I_i=0)],
\end{equation}
where $n$ denotes the number of treated samples when $I_i=1$, $s_j^{(k)}(I_i=1)$ is the $k$-th observed sample, and $\hat{s}_j^{(k)}(I_i=0)$ is an estimation that can be estimated from the transition in Eq. \ref{eq:mdp}.
\begin{theorem}\label{thm:causal order}
    Given a causal graph $\mathcal{G}=(V_\mathcal{S},E)$, for each pair of states $s_i,s_j$ with $i \ne j$, $s_i$ is the ancestor of $s_j$, i.e., $s_i$ has a direct path to $s_j$ if and only if $|\mathcal{C}^{Att}_{s_i\to s_j}|>0$.
\end{theorem}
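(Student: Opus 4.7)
The plan is to prove the biconditional by showing each direction under the Markov, faithfulness, and sufficiency assumptions already introduced in the problem formulation. I will first translate the ATT quantity into a statement about interventional distributions using Definition~\ref{def:graph}, noting that under the convention $I_i=1 \Leftrightarrow \operatorname{do}(s_i)$, the identity $p(\operatorname{do}(s_i^t)|\mathbf{s}_{\mathbf{Pa}_i}^{t-1})=p(\operatorname{do}(s_i^t))$ means the ``treated'' subpopulation is randomized by the intervention, so that in the large-sample limit $\mathcal{C}^{Att}_{s_i\to s_j}$ coincides with the usual ATE $\mathbb{E}[s_j\mid\operatorname{do}(s_i{=}1)]-\mathbb{E}[s_j\mid\operatorname{do}(s_i{=}0)]$ provided $\hat{s}_j(I_i{=}0)$ is a consistent estimator of $\mathbb{E}[s_j\mid\operatorname{do}(s_i{=}0)]$ obtainable from the factorization in Eq.~\ref{eq:mdp}. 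This reframing turns the theorem into a claim purely about the causal graph.

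For the ``only if'' direction (nonzero ATT $\Rightarrow$ $s_i$ is an ancestor of $s_j$), I would argue by contrapositive: suppose there is no directed path from $s_i$ to $s_j$ in $\mathcal{G}$. Then $s_j$ lies outside the descendants of $s_i$, and by truncated factorization, intervening on $s_i$ severs all incoming edges to $s_i$ while leaving the rest of $\mathcal{G}$ intact. Marginalizing over the unaffected non-descendant variables of $s_i$ (in particular the parents of $s_j$, whose joint distribution is invariant to $\operatorname{do}(s_i)$), one obtains $p(s_j\mid\operatorname{do}(s_i{=}1))=p(s_j\mid\operatorname{do}(s_i{=}0))=p(s_j)$. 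Hence $\mathbb{E}[s_j(I_i{=}1)-s_j(I_i{=}0)]=0$, and by the randomization noted above, $\mathcal{C}^{Att}_{s_i\to s_j}=0$.

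For the ``if'' direction ($s_i$ is an ancestor of $s_j$ $\Rightarrow$ nonzero ATT), suppose there exists a directed path $s_i\to\cdots\to s_j$. The Markov assumption guarantees that $p(s_j\mid\operatorname{do}(s_i))$ depends nontrivially on the value of $s_i$ through the mechanisms along the path, and the faithfulness assumption rules out parameter-tuned cancellations across alternative directed paths, so $p(s_j\mid\operatorname{do}(s_i{=}1))\neq p(s_j\mid\operatorname{do}(s_i{=}0))$ as distributions. Composing the conditional means along the path (again using Eq.~\ref{eq:mdp}) yields distinct expected potential outcomes, whence $|\mathcal{C}^{Att}_{s_i\to s_j}|>0$.

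The main obstacle, and the place where most care is needed, is the bridge between the \emph{estimator} $\hat{\mathcal{C}}^{Att}$ defined from the replay buffer and the \emph{population-level} interventional quantity on which the two graph-theoretic arguments above operate. Concretely, one must verify that $\hat{s}_j^{(k)}(I_i{=}0)$, constructed from the natural-dynamics factors $p(s_j\mid \mathbf{s}_{\mathbf{Pa}_j}^{t-1})$, correctly reproduces the counterfactual outcome for a treated unit; this relies on causal sufficiency (no hidden confounders) and on the treated/untreated comparison being made over the same parent context. A secondary subtlety is that the faithfulness appeal in the forward direction must be formulated in terms of distributions of potential outcomes rather than conditional independences among observed variables, which amounts to invoking the interventional analogue of faithfulness that is implicit in the MDP-level assumptions stated in Section~\ref{sec:3.1}.
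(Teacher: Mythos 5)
Your proposal is correct and follows essentially the same route as the paper's own proof: both directions rest on the fact that an intervention on $s_i$ propagates only to its descendants, with the reverse implication handled by contradiction/contraposition on the non-ancestor case. If anything, your version is more careful than the paper's --- the paper simply asserts that intervening on an ancestor must change $s_j$, whereas you correctly observe that a faithfulness-type condition is needed to rule out cancellation along multiple directed paths, and you also flag the estimator-versus-population gap for $\hat{s}_j^{(k)}(I_i{=}0)$ that the paper's argument leaves implicit.
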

Please see \ref{sec_appdendix:theory} for detailed proofs of all theorems and lemma. Theorem \ref{thm:causal order} ensures that ATT can be used to identify the causal order. However, redundant edges might still exist even when accounting for the causal order.  To address this, we introduce a pruning stage and formulate a pruning method using a score-based approach to refine the causal discovery results. Specifically, the aim of causal structure learning can be formalized as maximizing the score of log-likelihood with an $\ell_0$-norm penalty:
\begin{equation}\label{eq:likelihood}\max_{G}\sum_{t=1}^{T}\sum_{i=1}^{|\mathcal{S}|}\log p(s^t_i|\mathbf{s}_{\mathbf{Pa}_i}^{t-1}))-\alpha \|\mathbf{G}\|_0,
\end{equation}
where $\mathbf{G}$ is the adjacency matrix of the causal graph \cite{chickering2002optimal}. Note that such $\ell_0$-norm can be relaxed to a quadratic penalty practically for optimization \cite{zheng2018dags} but we stick to the $\ell_0$-norm here for the theoretical plausibility.
Then by utilizing the score in Eq. \ref{eq:likelihood}, we can prune the redundant edges by checking whether the removed edge can increase the score above. We continue the optimization until no edge can be removed. By combining the orientation and the pruning stage, the causal structure is identifiable, which is illustrated theoretically in Theorem~\ref{thm:identifiability}.
\begin{theorem}[Identifiability]\label{thm:identifiability}
    Under the causal faithfulness and causal sufficiency assumptions, given the correct causal order and large enough data, the causal structure among states is identifiable from observational data.
\end{theorem}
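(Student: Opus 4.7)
The plan is to reduce the identifiability question, given the correct causal order, to a parent-set selection problem per node, and then show that the $\ell_0$-penalized log-likelihood score in Eq.~\ref{eq:likelihood} is consistent at the true graph under the stated assumptions. The high-level idea is that once the causal order is fixed, every DAG compatible with it is determined by choosing, for each state $s_i^t$, a subset of its predecessors (in the order) as its parents; the score in Eq.~\ref{eq:likelihood} decomposes over nodes, so identifiability of the whole graph reduces to identifying the correct parent set $\mathbf{Pa}_i$ for each $s_i^t$ from its candidate ancestors.

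First, I would formalize the reduction: by Theorem~\ref{thm:causal order} we have a valid topological order, so any edge $s_j \to s_i$ with $j$ succeeding $i$ in the order can be excluded a priori. Let $\mathbf{A}_i^{t-1}$ denote the set of ancestors of $s_i^t$ from the previous time step according to the given order. Then identifiability amounts to showing that, in the large-sample limit, the true parent set $\mathbf{Pa}_i \subseteq \mathbf{A}_i$ is the unique maximizer of the per-node score
\begin{equation}
    \mathcal{S}_i(\mathbf{P}) \;=\; \mathbb{E}\!\left[\log p(s_i^t \mid \mathbf{s}_{\mathbf{P}}^{t-1})\right] \;-\; \alpha\,|\mathbf{P}|,
\end{equation}
over subsets $\mathbf{P}\subseteq \mathbf{A}_i$, for an appropriate penalty $\alpha>0$.

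Second, I would verify this claim by a standard two-direction argument. (i) Any proper superset $\mathbf{P}\supsetneq \mathbf{Pa}_i$ cannot strictly improve the expected log-likelihood, because under the causal Markov condition $s_i^t \ci \mathbf{s}_{\mathbf{P}\setminus\mathbf{Pa}_i}^{t-1} \mid \mathbf{s}_{\mathbf{Pa}_i}^{t-1}$, while the $\ell_0$ penalty strictly increases by $\alpha(|\mathbf{P}|-|\mathbf{Pa}_i|)>0$; hence $\mathbf{Pa}_i$ is strictly preferred over any strict superset. (ii) Any subset $\mathbf{P}\subsetneq \mathbf{Pa}_i$, or any set $\mathbf{P}$ omitting some true parent, yields a strictly smaller expected log-likelihood by causal faithfulness: omitting a parent $s_k\in\mathbf{Pa}_i\setminus\mathbf{P}$ implies a genuine conditional dependence $s_i^t \nci s_k^{t-1} \mid \mathbf{s}_{\mathbf{P}}^{t-1}$, so conditioning on $\mathbf{P}$ instead of $\mathbf{Pa}_i$ produces a strictly suboptimal Kullback--Leibler projection. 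Causal sufficiency is needed here to ensure that all such dependencies come from observed parents rather than latent confounders. A penalty $\alpha$ smaller than the minimum KL gap obtained in (ii) and positive as in (i) makes $\mathbf{Pa}_i$ the unique maximizer. Large-sample consistency then follows from the law of large numbers: the empirical log-likelihood in Eq.~\ref{eq:likelihood} converges to its expectation uniformly over the finite collection of candidate parent sets, so with probability tending to one the empirical maximizer equals $\mathbf{Pa}_i$.

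The main obstacle, and the step I would be most careful about, is the faithfulness direction (ii): one has to rule out not only missing a single true parent but arbitrary subsets that substitute some true parents with non-parent ancestors. This requires showing that for every such subset $\mathbf{P}$, faithfulness produces at least one conditional dependence that contributes a strictly positive KL gap, and that the minimum of these gaps over the (finite) set of incorrect subsets is bounded away from zero so a fixed $\alpha$ works uniformly. Given this, aggregating the per-node identifiability across $i=1,\dots,|\mathcal{S}|$ recovers the full adjacency matrix $\mathbf{G}$, establishing identifiability of $\mathcal{G}$.
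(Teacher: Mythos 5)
Your proposal is correct and follows essentially the same route as the paper: given the causal order, identifiability is reduced to consistency of the $\ell_0$-penalized log-likelihood score, where supersets of the true parent set are excluded by the penalty (via Markov-implied conditional independence) and subsets are excluded by a strictly positive likelihood gap guaranteed by faithfulness, with causal sufficiency ruling out confounding explanations. The only difference is presentational: the paper packages your two-direction argument as the ``locally consistent scoring criterion'' property of the BIC-type score (citing Chickering, 2002, Lemma~7) and phrases the conclusion as a greedy edge-pruning procedure, whereas you unpack the same two properties explicitly as a per-node global-maximizer argument with an explicit KL gap controlling the penalty $\alpha$.
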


\subsection{Performance guarantees}
To analyze the performance of the optimization of the causal policy, we first list the important Lemma \ref{lemma_1} where the differences between two different causal policies are highly correlated with their causal graphs, and then show that policy learning can be well supported by the causal learning.

\begin{lemma}\label{lemma_1}
Let $\pi_{\mathcal{G}^{*}}(\cdot |\mathbf{s})$ be the policy under the true causal graph $\mathcal{G}^{*}=\left(V_\mathcal{S},E^{*}\right)$. For any causal graph $\mathcal{G}=(V_\mathcal{S},E)$, when the defined causal policy $\pi_{\mathcal{G}}(\cdot |\mathbf{s})$ converges, the following inequality holds: 
\begin{equation}
\begin{split}
D_{TV}(\pi_{\mathcal{G}^{*}},\pi_{\mathcal{G}}) \leq & \dfrac{1}{2} (\|M_{\mathbf{s}}(\mathcal{G})-M_{\mathbf{s}}(\mathcal{G}^{*})\|_{1} 
\\& + \Vert\mathbf{1}_{\{a:m_{\mathbf{s},a}^{\mathcal{G}^{*}}=1\land m_{\mathbf{s},a}^{\mathcal{G}}=1 \} } \Vert_{1},
\end{split}
\end{equation}
where $\displaystyle \| M_{\mathbf{s}} (\mathcal{G} )-M_{\mathbf{s}}\left(\mathcal{G}^{*}\right) \| _{1}$ is the $\displaystyle \ell _{1}$-norm of the masks measuring the differences of two policies, $\displaystyle \mathbf{1}$ is an indicator function and $\displaystyle \| \mathbf{1}_{\left\{a:m_{\mathbf{s},a}^{\mathcal{G}^{*}} =1\land m_{\mathbf{s},a}^{\mathcal{G}} =1\right\}} \| _{1}$ measures the number of actions that are not masked on both policies.
\end{lemma}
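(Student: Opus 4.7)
The plan is to prove the inequality by directly expanding the total variation distance using its definition and partitioning the action space according to the masking configuration at state $\mathbf{s}$. I will first write $D_{TV}(\pi_{\mathcal{G}^*},\pi_{\mathcal{G}}) = \tfrac{1}{2}\sum_{a\in\mathcal{A}} |\pi_{\mathcal{G}^*}(a|\mathbf{s})-\pi_{\mathcal{G}}(a|\mathbf{s})|$, then split the action set into four disjoint pieces based on the pair $(m_{\mathbf{s},a}^{\mathcal{G}^*},m_{\mathbf{s},a}^{\mathcal{G}})\in\{0,1\}^2$. This partition is the natural object because the causal-mask definition $\pi_{\mathcal{G}}(\cdot|\mathbf{s})=M_{\mathbf{s}}(\mathcal{G})\circ\pi(\cdot|\mathbf{s})$ makes the summand vanish outside the support implied by the mask.

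For the case $(0,0)$, both policies assign zero probability to $a$ so the contribution is $0$. For the asymmetric cases $(1,0)$ and $(0,1)$, one policy gives $0$ and the other some value in $[0,1]$; summing over these actions gives an upper bound equal to the number of mask mismatches, which is exactly $\|M_{\mathbf{s}}(\mathcal{G})-M_{\mathbf{s}}(\mathcal{G}^*)\|_1$ since the masks are binary. For the case $(1,1)$, both policies assign values in $[0,1]$, so I bound $|\pi_{\mathcal{G}^*}(a|\mathbf{s})-\pi_{\mathcal{G}}(a|\mathbf{s})|$ trivially by $1$ and sum, yielding $\|\mathbf{1}_{\{a:m_{\mathbf{s},a}^{\mathcal{G}^*}=1\land m_{\mathbf{s},a}^{\mathcal{G}}=1\}}\|_1$. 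Combining the four cases and dividing by $2$ produces the stated inequality.

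The step requiring the most care is the $(1,1)$ case, where the convergence hypothesis must be invoked to guarantee that $\pi_{\mathcal{G}^*}$ and $\pi_{\mathcal{G}}$ are well-defined probability distributions (with the mask-induced renormalization absorbed into each converged policy), so that each per-action difference is genuinely bounded by $1$. I will state this as a short preliminary remark: after convergence, both causal policies are valid distributions on $\mathcal{A}$ whose supports are contained in the respective unmasked sets, and therefore each $|\pi_{\mathcal{G}^*}(a|\mathbf{s})-\pi_{\mathcal{G}}(a|\mathbf{s})|\le 1$ termwise.

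The main obstacle I anticipate is not the algebra but the interpretive work of reconciling the inequality with the notational convention of the paper, in particular whether $m=1$ denotes the retained or the masked-out coordinate, and whether the underlying $\pi$ is shared between both policies or converges to distinct limits under different masks. Once that convention is pinned down and the convergence clause is used to secure the trivial per-term bound, the remaining argument reduces to bookkeeping over the four mask cases and an application of the triangle inequality to aggregate them, so no deeper machinery is needed.
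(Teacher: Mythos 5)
Your proposal is correct and follows essentially the same route as the paper's proof: expand the total variation as $\tfrac{1}{2}\sum_a |m_{\mathbf{s},a}^{\mathcal{G}^*}\pi^*(a|\mathbf{s})-m_{\mathbf{s},a}^{\mathcal{G}}\pi(a|\mathbf{s})|$, partition the actions by the four binary mask configurations, and bound each nonzero piece termwise by $1$ using $0\le\pi(a|\mathbf{s})\le 1$. Your preliminary remark on pinning down the mask convention and on the converged policies being valid distributions is a reasonable clarification of a point the paper leaves implicit, but it does not change the argument.
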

Lemma \ref{lemma_1} shows that the total variation distance between two polices $\pi_{\mathcal{G}^{*}}(\cdot |\mathbf{s})$ and $\pi_{\mathcal{G}}(\cdot |\mathbf{s})$, is upper bounded by two terms that depend on the divergence between the estimated causal structure (causal masks) and the true one.
It bridges the gap between causality and reinforcement learning, which also verifies that causal knowledge matters in policy optimization. 
In turn, this lemma facilitates the improvement of the value function's performance, as shown in Theorem~\ref{thm:value_bound}. 

\begin{theorem}\label{thm:value_bound}
Given a causal policy $\pi_{\mathcal{G}^{*}}(\cdot|\mathbf{s})$ under the true causal graph $\mathcal{G}^{*}=\left(V_\mathcal{S},E^{*}\right)$ and a policy $\pi_{\mathcal{G}}(\cdot |\mathbf{s})$ under the causal graph $\mathcal{G} =(V_\mathcal{S},E)$, recalling $R_{\max}$ is the upper bound of the reward function, we have the performance difference of $\pi_{\mathcal{G}^{*}}(\cdot|\mathbf{s})$ and $\pi_{\mathcal{G}}(\cdot|\mathbf{s})$ be bounded as below,
\begin{equation}
\begin{split}
\begin{aligned}
V_{\pi_{\mathcal{G}^{*}}} -V_{\pi_{\mathcal{G}}}  \leq &\frac{R_{\max}}{(1-\gamma )^{2}}(\| M_{\mathbf{s}}(\mathcal{G})-M_{\mathbf{s}}(\mathcal{G}^{*}) \|_{1} \\& + \Vert \mathbf{1}_{\{a:m_{\mathbf{s},a}^{\mathcal{G}^{*}}=1\land m_{\mathbf{s},a}^{\mathcal{G}}=1 \}}\Vert_{1}).
\end{aligned}
\end{split}
\end{equation}
\end{theorem}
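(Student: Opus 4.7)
The plan is to reduce the value-function gap to a policy-divergence gap and then invoke Lemma~\ref{lemma_1}. Concretely, I would start from the standard performance difference lemma: for any two policies $\pi_1,\pi_2$ on the same MDP,
\begin{equation*}
V_{\pi_1}(\mathbf{s}) - V_{\pi_2}(\mathbf{s}) = \frac{1}{1-\gamma}\mathbb{E}_{\mathbf{s}'\sim d_{\pi_1}}\!\bigl[\mathbb{E}_{a\sim\pi_1(\cdot\mid\mathbf{s}')}Q_{\pi_2}(\mathbf{s}',a) - V_{\pi_2}(\mathbf{s}')\bigr],
\end{equation*}
where $d_{\pi_1}$ is the discounted state visitation distribution induced by $\pi_1$. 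The inner difference can be rewritten as $\sum_{a}(\pi_1(a\mid\mathbf{s}')-\pi_2(a\mid\mathbf{s}'))Q_{\pi_2}(\mathbf{s}',a)$, which, since $\sum_a(\pi_1-\pi_2)=0$, is invariant under adding a constant to $Q_{\pi_2}$. This lets me replace $Q_{\pi_2}$ by $Q_{\pi_2}-\tfrac{1}{2}(\sup Q_{\pi_2}+\inf Q_{\pi_2})$ so that its magnitude is bounded by the oscillation $\tfrac{1}{2}(\sup Q - \inf Q) \leq R_{\max}/(1-\gamma)$.

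Next I would apply the duality characterization of total variation, $\sum_a (\pi_1(a\mid\mathbf{s}')-\pi_2(a\mid\mathbf{s}'))f(a) \leq 2\,\|f\|_\infty \cdot D_{TV}(\pi_1(\cdot\mid\mathbf{s}'),\pi_2(\cdot\mid\mathbf{s}'))$, to obtain
\begin{equation*}
V_{\pi_{\mathcal{G}^{*}}}(\mathbf{s}) - V_{\pi_{\mathcal{G}}}(\mathbf{s}) \leq \frac{2 R_{\max}}{(1-\gamma)^{2}} \, \mathbb{E}_{\mathbf{s}'\sim d_{\pi_{\mathcal{G}^{*}}}}\bigl[D_{TV}\bigl(\pi_{\mathcal{G}^{*}}(\cdot\mid\mathbf{s}'),\pi_{\mathcal{G}}(\cdot\mid\mathbf{s}')\bigr)\bigr].
\end{equation*}
This is the standard simulation-type bound and isolates the role of the policy divergence.

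Finally, I would plug in Lemma~\ref{lemma_1}, which directly upper-bounds $D_{TV}(\pi_{\mathcal{G}^{*}},\pi_{\mathcal{G}})$ at any state by $\tfrac{1}{2}(\|M_{\mathbf{s}}(\mathcal{G})-M_{\mathbf{s}}(\mathcal{G}^{*})\|_{1} + \|\mathbf{1}_{\{a:m_{\mathbf{s},a}^{\mathcal{G}^{*}}=1\land m_{\mathbf{s},a}^{\mathcal{G}}=1\}}\|_{1})$. The factor $1/2$ cancels the factor $2$ in front, and taking the bound pointwise (or a worst-state interpretation) yields exactly the stated inequality with constant $R_{\max}/(1-\gamma)^{2}$.

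The main obstacle I anticipate is the bookkeeping on the constant: making sure the telescoping over the discounted horizon produces $1/(1-\gamma)^{2}$ rather than $1/(1-\gamma)$, and that the centering trick on $Q_{\pi_2}$ cleanly gives $R_{\max}/(1-\gamma)$ rather than $2R_{\max}/(1-\gamma)$, so that combined with Lemma~\ref{lemma_1}'s own $1/2$ the result comes out with the claimed prefactor. A secondary subtlety is interpreting the bound pointwise in $\mathbf{s}$ (as the statement appears to do) versus in expectation under $d_{\pi_{\mathcal{G}^{*}}}$; I would resolve this by observing that the inequality in Lemma~\ref{lemma_1} holds at every $\mathbf{s}$, so the expectation can be upper-bounded by the per-state bound, leaving the stated form valid.
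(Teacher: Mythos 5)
Your proposal is correct and lands on the same key intermediate inequality as the paper, namely
\begin{equation*}
V_{\pi_{\mathcal{G}^{*}}}-V_{\pi_{\mathcal{G}}}\;\leq\;\frac{2R_{\max}}{(1-\gamma)^{2}}\,\mathbb{E}_{\mathbf{s}\sim \mathbf{h}_{\pi_{\mathcal{G}^{*}}}}\bigl[D_{TV}\bigl(\pi_{\mathcal{G}}(\cdot\mid\mathbf{s}),\pi_{\mathcal{G}^{*}}(\cdot\mid\mathbf{s})\bigr)\bigr],
\end{equation*}
after which both you and the paper finish identically by substituting Lemma~\ref{lemma_1}, whose factor $\tfrac{1}{2}$ cancels the $2$. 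The route to that intermediate bound, however, is genuinely different. The paper first writes $V_{\pi_{\mathcal{G}}}=\tfrac{1}{1-\gamma}\mathbb{E}_{(\mathbf{s},a)\sim\rho_{\pi_{\mathcal{G}}}}[r]$ and bounds the value gap by $\tfrac{2R_{\max}}{1-\gamma}D_{TV}(\rho_{\pi_{\mathcal{G}}},\rho_{\pi_{\mathcal{G}^{*}}})$, then controls the occupancy-measure divergence through two auxiliary lemmas (a state-distribution discrepancy bound obtained from a Neumann-series expansion of $(I-\gamma \mathbf{P}_{\pi_{\mathcal{G}}})^{-1}$, following the imitation-learning error bounds of Xu et al., and then a state-action discrepancy bound). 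You instead invoke the performance difference lemma, center $Q_{\pi_{\mathcal{G}}}$ using $\sum_a(\pi_1-\pi_2)=0$ to bound its oscillation by $R_{\max}/(1-\gamma)$, and apply TV duality. Your argument is more direct (one identity plus two elementary estimates versus two intermediate lemmas), while the paper's route produces the state- and state-action-distribution discrepancy bounds as reusable byproducts. Your constant accounting is consistent with the paper's, assuming $|r|\leq R_{\max}$ so that the oscillation of $Q$ is at most $2R_{\max}/(1-\gamma)$. One shared loose end: the final bound's right-hand side depends on a free $\mathbf{s}$ while the left-hand side does not; you flag this and resolve it by taking a worst-state reading of the mask terms, which is also what the paper implicitly does when it drops the expectation over $\mathbf{h}_{\pi_{\mathcal{G}^{*}}}$.
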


An intuition of performance guarantees is that policy exploration helps to learn better causal structures through intervention, while better causal structures indicate better policy improvements. The detailed proofs of the above lemma and theorems are in Appendix.

\section{Experiments}
In this section, we first discuss the basic setting of our designed environment as well as the baselines used in the experiments.
Then, to evaluate the proposed approach, we conducted comparative experiments on the environment and provide the numerical results and detailed analysis.
\begin{figure}[t]
	\centering
    \subfloat[] {
        \label{fig:policy_reward}
        \centering
        \includegraphics[width=0.3\linewidth]{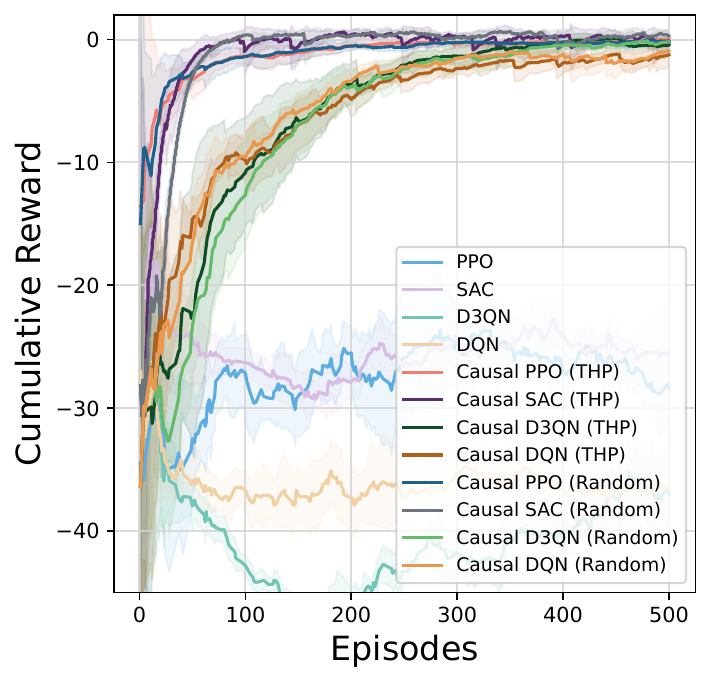}
    }
	\centering
    \subfloat[] {
        \label{fig:policy_steps}
        \centering
        \includegraphics[width=0.3\linewidth]{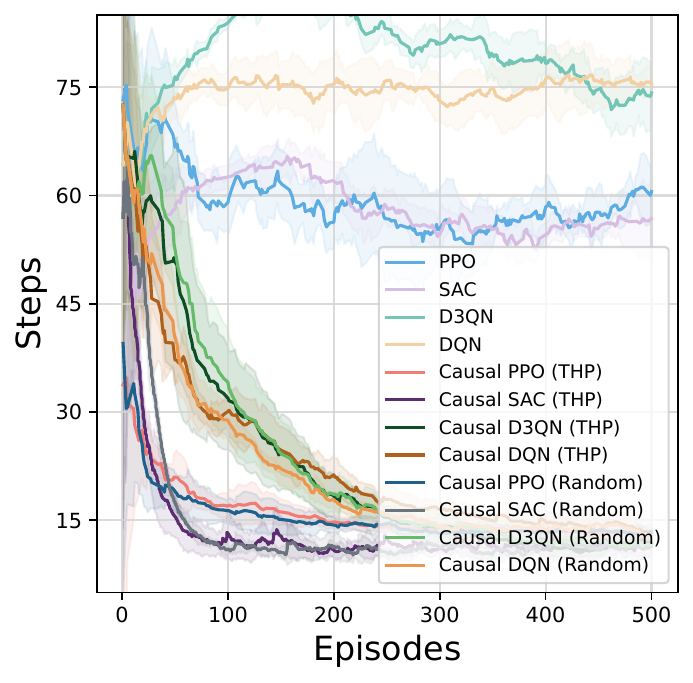}
    }
    
	\centering
    \subfloat[] {
        \label{fig:policy_alarms}
        \centering
        \includegraphics[width=0.3\linewidth]{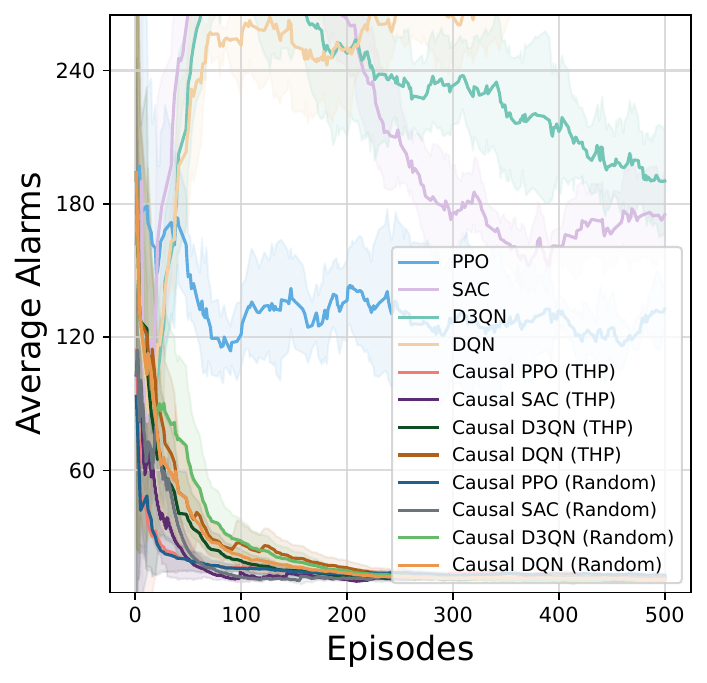}
    }
    \centering
    \subfloat[] {
        \label{fig:f1}
        \centering
        \includegraphics[width=0.3\linewidth]{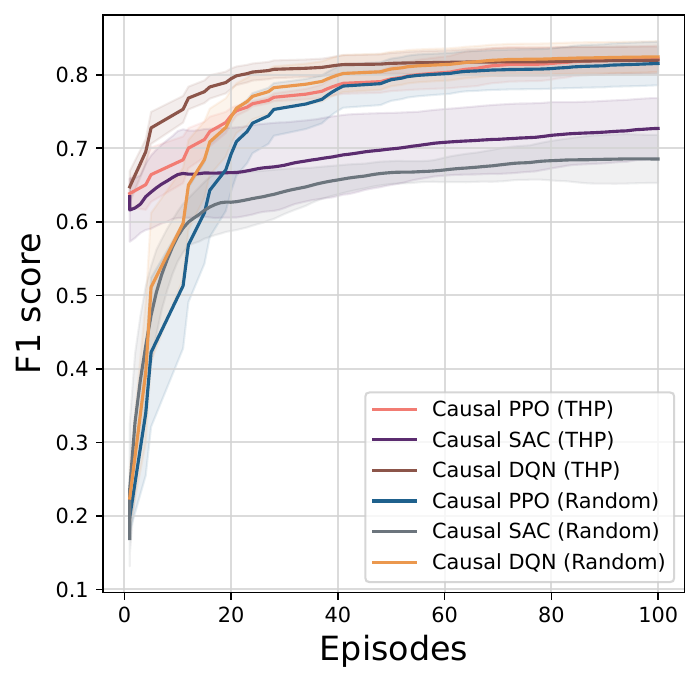}
    }
    \centering
    \caption{(a) Cumulative rewards of Causal PPO, Causal SAC, Causal D3QN, Causal DQN with THP-initialized structures and random-initialized structures, respectively, and baselines; (b) Intervention steps of our proposed approach compared to the baselines; (c) Average number of alarms per episode for our methods compared to the baselines; (d) The F1 score of causal structure learning from different methods.} 
    \label{fig:exp_policy_result}
\end{figure}
\begin{table*}
\liuhao
    \centering
    \caption{Results of causal structure learning}
    \begin{tabular}{l|c|c|c|c|c}
        \toprule
         Methods & F1 score & Precision & Recall & Accuracy & SHD\\
        \midrule
        THP & 0.638 $\pm$ 0.017 & 0.775 $\pm$ 0.020 & 0.543 $\pm$ 0.015 & 0.824 $\pm$ 0.007 & 57.00 $\pm$ 2.121
        \\
        Causal PPO (THP) & \textbf{0.861} $\pm$ 0.018 & 0.865 $\pm$ 0.007 & \textbf{0.856} $\pm$ 0.029 & \textbf{0.921 $\pm$ 0.009 }& \textbf{26.00 $\pm$ 2.915}
        \\
        Causal SAC (THP) & 0.858 $\pm$ 0.013 & \textbf{0.871 $\pm$ 0.007} & 0.846 $\pm$ 0.024 & 0.919 $\pm$ 0.007 & 26.25 $\pm$ 2.165
        \\
        Causal D3QN (THP) & 0.836 $\pm$ 0.015 & 0.849 $\pm$ 0.021 & 0.823 $\pm$ 0.014 & 0.904 $\pm$ 0.009 & 31.00 $\pm$ 3.000
        \\
        Causal DQN (THP) & 0.832 $\pm$ 0.020 & 0.848 $\pm$ 0.020 & 0.817 $\pm$ 0.025 & 0.904 $\pm$ 0.013 & 31.00 $\pm$ 4.062
        \\
        \midrule
        Random Initiation & 0.188 $\pm$ 0.013 & 0.130 $\pm$ 0.009 & 0.130 $\pm$ 0.009 & 0.669 $\pm$ 0.017 & 107.5 $\pm$ 5.362
        \\
        Causal PPO (Random) & \textbf{0.840} $\pm$ 0.019 & 0.847 $\pm$ 0.015 & \textbf{0.834} $\pm$ 0.025 & \textbf{0.909 $\pm$ 0.011} & 29.50 $\pm$ 3.640
        \\
        Causal SAC (Random) & 0.837 $\pm$ 0.019 & \textbf{0.864 $\pm$ 0.015} & 0.811 $\pm$ 0.022 & 0.908 $\pm$ 0.010 & \textbf{29.75 $\pm$ 3.269}
        \\
        Causal D3QN (Random) & 0.839 $\pm$ 0.016 & 0.847 $\pm$ 0.022 & 0.832 $\pm$ 0.017 & 0.907 $\pm$ 0.011 & 30.25 $\pm$ 3.491
        \\
        Causal DQN (Random) & 0.830 $\pm$ 0.019 & 0.849 $\pm$ 0.025 & 0.813 $\pm$ 0.020 & 0.904 $\pm$ 0.013 & 31.25 $\pm$ 4.085
        \\
        \bottomrule
    \end{tabular}
    \label{tab_causal}
\end{table*}

\subsection{Environment design}
Since most commonly used RL benchmarks do not explicitly allow causal reasoning, we constructed \textit{FaultAlarmRL}, a simulated fault alarm environment based on the real alarm data in the real-world application of wireless communication networks \cite{cai2022thps}.

FaultAlarmRL environment is designed to mimic the operation process in a large communication network within a Markov Decision Process (MDP) framework. In the Operations and Maintenance (O\&M) process of such networks, efficiently and accurately locating the root cause of alarms within a given time period is crucial. Timely fault elimination improves O\&M efficiency and ensures communication quality. In real wireless networks, the alarm event sequences of different nodes influence each other through the node topology, and the causal mechanisms between different types of alarm events are also affected by the underlying topology.

The simulation environment contains 50 device nodes and 18 alarm types, with the true causal relationships between alarm types and the meaning of each alarm type shown in Table~\ref{tab:ground_truth}. Alarm events are generated by root cause events based on the alarm causal graph and device topology graph propagation. There also exist spontaneous noise alarms in the environment. To mimic the operation in a large communication network, we designed an MDP transition environment modified from the topological Hawkes process. For example, the number of alarm events that occur in $X_{t+1}$ is determined by the number of alarms in the previous time interval $X_t$ without decay. This means that alarms persist until they are "fixed" and this type of transition constructs an MDP environment where the alarm propagation process can be expressed as:
$$
\begin{aligned}
p(s_{t+1}|s_{t},a_{t};G_V,G_N)
&= P(\mathbf{X}_{t+1} | \mathbf{X}_{t};G_V,G_N) \\
&= \prod_{n\in N,v\in V} P(X_{n,v,t+1} | X_{n,PA_{v},t}) \\
&= \prod_{n\in N,v\in V} \operatorname{Pois}(X_{n,v,t+1}; \lambda_{v}(n,t+1)),
\end{aligned}
$$
where $X_{n,v,t+1}$ is the count of occurrence events of event type $v$ at node $n$ in the time interval $[t+1-\Delta t, t+1]$, $\operatorname{Pois}$ is the Poisson distribution, and $\lambda_v(n,t)$ is the Hawkes process intensity function. Specifically, $\lambda_v(n,t)$ is defined as:
$$
\lambda _{v}(n,t) = \mu _{v} + \sum _{v'\in PA_{v}} \sum _{n'\in N} \sum _{k=0}^{K} \alpha _{v',v,k} \hat{A}_{n',n}^{K} \kappa X_{n',v',t-1},
$$
where $X_{n,v,t-1}$ is the count of occurrence alarms of type $v$ at node $n$ in the time interval $[t-1-\Delta t, t-1]$, $\kappa$ is the exponential kernel function, $k$ is the maximum hop, $\alpha _{v',v,k}$ is the propagation intensity function of the alarm, $\hat{A} \coloneqq D^{-1/2}AD^{-1/2}$ is the normalized adjacency matrix of the topological graph, $A$ is the adjacency matrix, $D$ is the diagonal degree matrix, $\hat{A}_{n',n}^K$ denotes the $n',n$-th entries of the $K$-hop topological graph, and $\mu _{v}$ is the spontaneous intensity function of the alarm $v$.

The state in FaultAlarmRL is the current observed alarm information, which includes the time of the fault alarm, the fault alarm device, and the fault alarm type. The state space has $50 \times 18 \times 2 = 1800$ dimensions. The action space contains 900 discrete actions, each of which represents a specific alarm type on a specific device. We define the reward function as:
$$
r = \frac{N_t - N_{t+1}}{N_t} - \frac{t}{\text{step}_{\text{max}}},
$$
where $N_t$ represents the number of alarms at time $t$, and $\text{step}_{\text{max}}$ is the maximum number of steps in an episode, which is set to 100. Please see Section~\ref{sec:hyper-parameters} for further details on the hyper-parameters of the environment. Additionally, we further evaluate our method in \textit{cart-pole} environment from the OpenAI Gym toolkit (see Section~\ref{sec:carpole}).

\subsection{Experimental setups}
We evaluate the performance of our methods in terms of both causal structure learning and policy learning.
We first sampled 2000 alarm observations from the environment for the pre-causal structure learning.
We learn the initial causal structure leveraging the causal discovery method topological Hawkes process (THP)~\cite{cai2021thp} that considers the topological information behind the event sequence.
In policy learning, we take the SOTA model-free algorithms PPO~\cite{schulman2017proximal}, SAC~\cite{haarnoja2018soft}, D3QN~\cite{wang2016dueling}, and DQN~\cite{mnih2013playing} which are suitable for discrete cases as the baselines, and call the algorithms after applying our method Causal PPO, Causal SAC, Causal D3QN, Causal DQN.
For a fair comparison, we use the same network structure, optimizer, learning rate, and batch size when comparing the native methods with our causal methods.
We measure the performance of policy learning in terms of cumulative reward, number of interactions, and average number of alerts per episode. In causal structure learning, Recall, Precision, F1, Accuracy and SHD are used as the evaluation metrics.
All results were averaged across four random seeds, with standard deviations shown in the shaded area.

\begin{figure}[t]
	\centering
    \subfloat[] {
        \centering
        \includegraphics[width=0.3\linewidth]{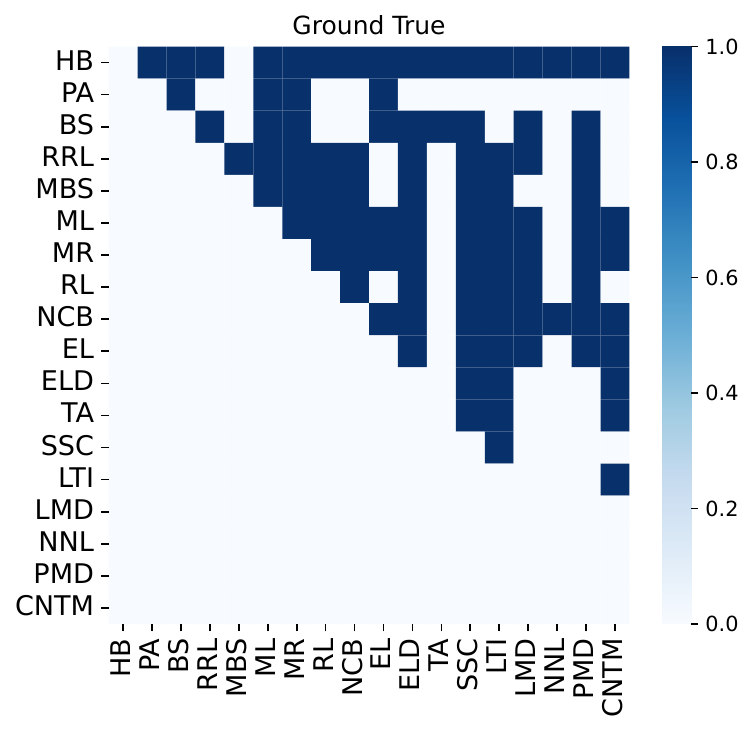}
    }
	\centering
    \subfloat[] {
        \centering
        \includegraphics[width=0.3\linewidth]{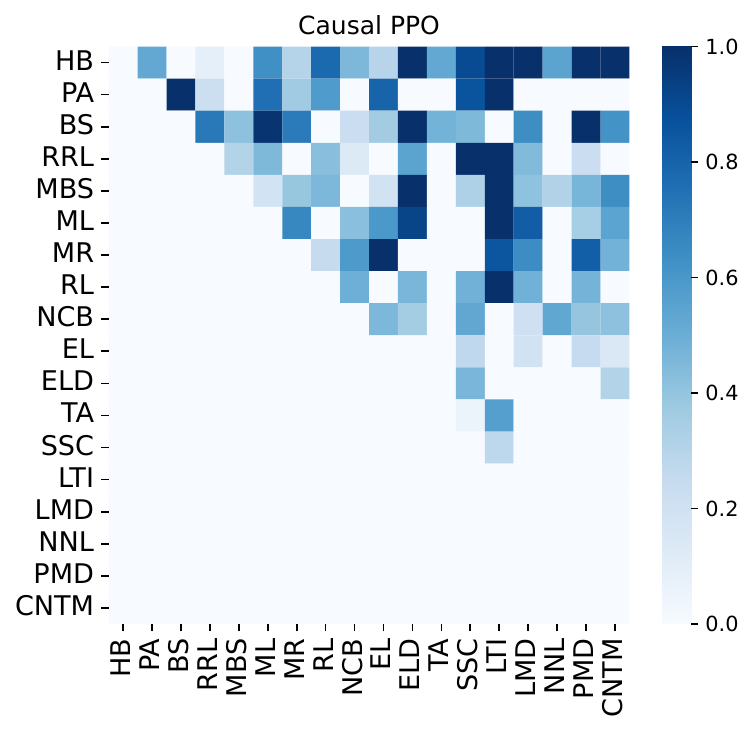}
    }
    \centering
    \subfloat[] {
        \centering
        \includegraphics[width=0.3\linewidth]{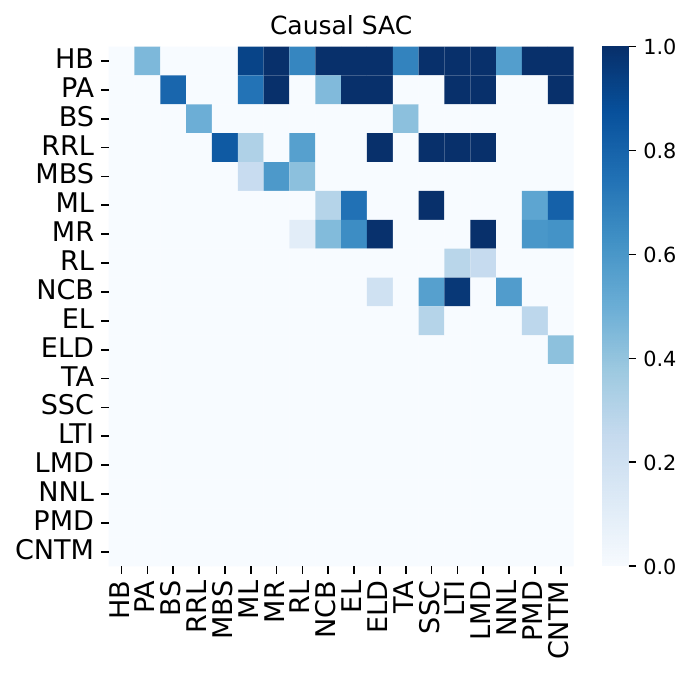}
    }
    
    \centering
    \subfloat[] {
        \centering
        \includegraphics[width=0.3\linewidth]{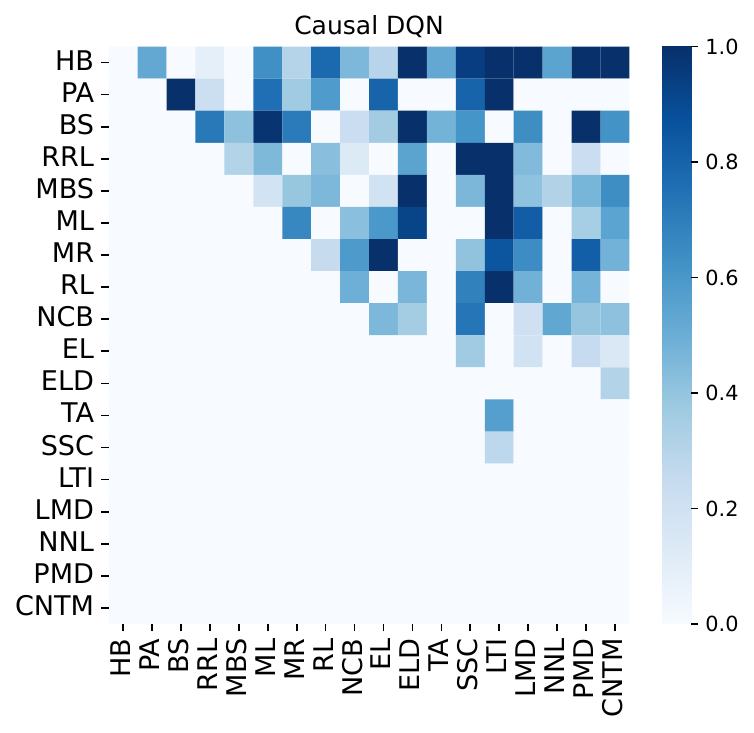}
    }
    \centering
    \subfloat[] {
        \centering
        \includegraphics[width=0.3\linewidth]{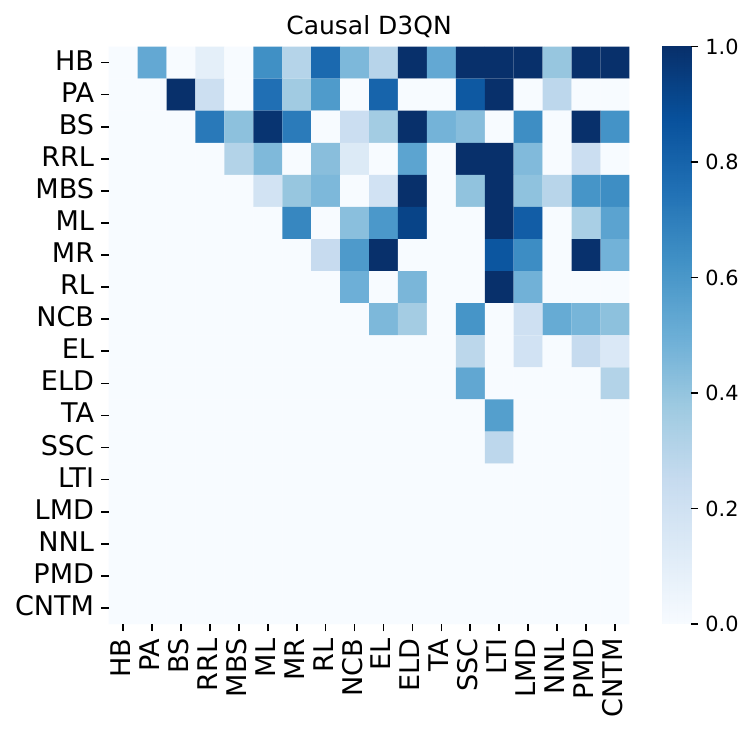}
    }
    \centering
    \caption{(a) Ground truth; (b-e) Discovered causal graphs by Causal PPO, Causal SAC, Causal DQN, Causal D3QN with THP-initialized causal structure.} 
    \label{fig:exp_causal_result}
\end{figure}
\subsection{Analysis of policy learning}
To evaluate the performance of our methods, the cumulative reward, the number of interventions, and the average number of alarms are used as evaluation metrics.
As shown in Figure~\ref{fig:policy_reward}, our methods significantly outperform the native algorithms after introducing our framework. It can be found that our algorithms only need to learn fewer rounds to reach higher cumulative rewards, which proves that the learned causal structure indeed helps to narrow the action space, and greatly speed up the convergence of the policy.

We also show the results of different algorithms on the number of intervention steps in Figure~\ref{fig:policy_steps}. Impressively, our method requires fewer interventions to eliminate all the environmental alarms and does not require excessive exploration in the training process compared with the baselines. 
This is very important in real-world O\&M processes, because too many explorations may pose a huge risk.
The above result also reflects that policies with causal structure learning capabilities have a more efficient and effective training process and sampling efficiency.

From Figure~\ref{fig:policy_alarms}, we can also see that our method has much smaller average number of alarms compared with the baselines. This indicates that our methods can detect root cause alarms in time, and thus avoid the cascade alarms generated from the environment.
It is worth noting that the huge performance difference between our methods and baselines shows that the learned causal mechanisms of the environment play a pivotal role in RL.

\begin{figure}[t]
	\centering
    \subfloat[] {
    \label{fig:policy_reward_ablition}
        \centering
        \includegraphics[width=0.3\linewidth]{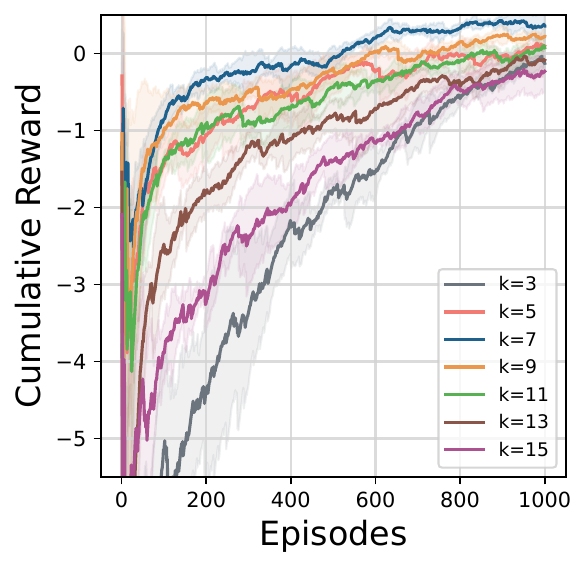}
    }
	\centering
    \subfloat[] {
        \label{fig:policy_steps_ablition}
        \centering
        \includegraphics[width=0.3\linewidth]{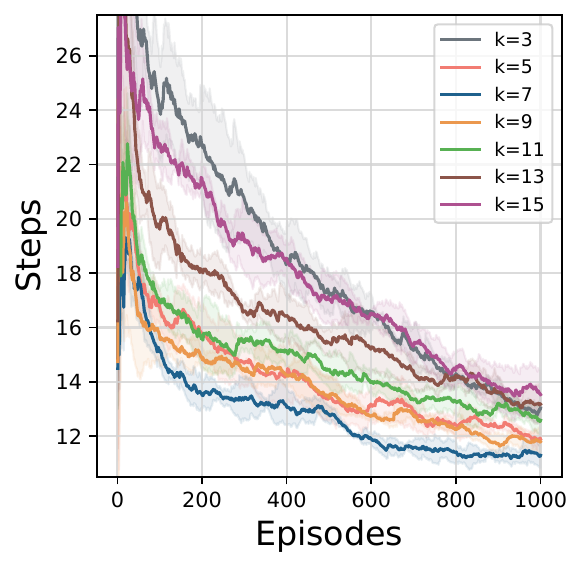}
    }
    
	\centering
    \subfloat[] {
        \label{fig:policy_alarms_ablition}
        \centering
        \includegraphics[width=0.3\linewidth]{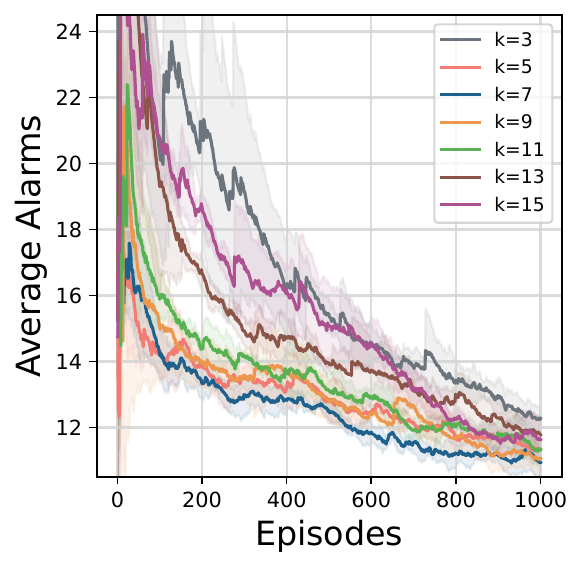}
    }
    \centering
    \subfloat[] {
        \label{fig:f1_ablition}
        \centering
        \includegraphics[width=0.3\linewidth]{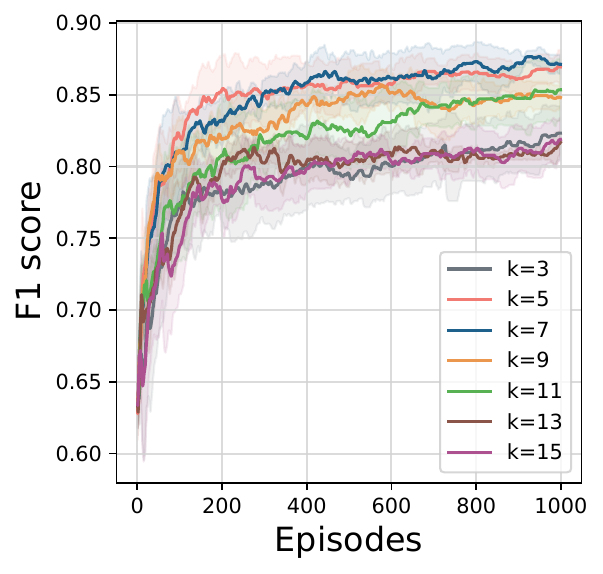}
    }
    \centering
    \caption{(a)-(c)Cumulative rewards, intervention steps, and average number of alarms per episode for Causal PPO based on THP initialization structures at different $K$ values; (d) The F1 scores of causal structure learning based on Causal PPO with THP initialization structure for different $K$ values.} \label{fig:exp_policy_result_ablation}
\end{figure}

\subsection{Analysis of causal structure learning}

To better demonstrate the effectiveness of our method, we only provide a small amount of observational data in the early causal structure learning.
As shown in Table~\ref{tab_causal}, the causal structure learned by THP in the initial stage has a large distance from the ground truth.
However, as we continue to interact with the environment, our methods gradually update the causal graph, bringing the learned causal structure closer to the ground truth. 
From Table~\ref{tab_causal} we can see that the F1 score values of our causal method are all over 0.8, which is significant compared with the initial THP result. The learned causal structures are given in Figure \ref{fig:exp_causal_result}. We can see that the proposed method can indeed identify the correct structure and interestingly all the root cause variables are mostly identified due to the identification of the causal order. 
In order to verify the robustness of our causal graph updating mechanism, we also conducted experiments on the initial random graph. As shown in Table~\ref{tab_causal}, even if the initial random graph is far from the ground truth, through continuous interactive updating, we can eventually learn a more accurate causal structure compared with the THP algorithm.
In addition, as shown in Figure \ref{fig:f1}, our methods converge to the optimal value early in the pre-training period for the learning of causal structure, regardless of whether it is given a random graph or a prior graph, which indicates that a small amount of intervention up front is enough to learn the causal structure.
Taking Causal PPO as an example, its F1 score has reached $0.7$ after only $20$ episodes.
This shows that even in the case of random initial causal structure, our method can still achieve a correct causal graph by calculating the treatment effects and performing the pruning step, which is more robust in the application.

\subsection{Sensitivity analysis}
The parameter $K$ represents the number of potential root-cause errors considered in the causal order. We further conduct sensitivity experiments to evaluate the sensitivity of the hyperparameter $K$, which controls the TopK causal order in policy learning. We conduct a sensitivity analysis using Causal PPO as a case study. 
The results are given in Figures~\ref{fig:policy_reward_ablition}~-~\ref{fig:f1_ablition}, which show the variations in the accuracy and robustness of policy learning and causal structure learning for different values of $K$. 
Specifically, when the $K$ is too large (e.g., $K>11$), the candidate the action under the causal mask would also be large, increasing the redundancy of the action space which decreases the policy's performance. Similarly, when the $K$ value is small (e.g., $K<5$), the policy's performance worsens because the overly constrained action space may limit the exploration of optimal actions. Thus, the $K$ controls the trade-off between the exploration and the exposition in our method.

\section{Conclusion}
This paper proposes an online causal reinforcement learning framework with a causal-aware policy that injects the causal structure into policy learning while devising a causal structure learning method by connecting the intervention and the action of the policy. We theoretically prove that our causal structure learning can identify the correct causal structure. To evaluate the performance of the proposed method, we constructed a FaultAlarmRL environment. Experiment results show that our method achieves accurate and robust causal structure learning as well as superior performance compared with SOTA baselines for policy learning.


\Acknowledgements{This research was supported in part by National Science and Technology Major Project (2021ZD0111501), National Science Fund for Excellent Young Scholars (62122022), Natural Science Foundation of China (U24A20233, 62206064, 62206061, 62476163, 62406078,62406080),  Guangdong Basic and Applied Basic Research Foundation (2023B1515120020).}


\newpage
\begin{appendix}
\setcounter{theorem}{0}
\section{Table of notation table}\label{sec_appdendix:notation}
Table \ref{tab:notation} summarizes notations used in this paper.

\begin{table}[h!]
\liuhao
\renewcommand\arraystretch{1.1}
\caption{A summary of the notation used in this paper.}
\label{tab:notation}
\centering
  \begin{tabular}{c|l}
    \toprule
    Notation             & Description  \\
    \midrule
    $\mathcal{S}$         & State space  \\
   $\mathcal{A}$         & Action space \\
    $\mathbf{s}$       & A vector of state in $\mathcal{S}$, i.e., $\mathbf{s}=[s_1,s_2,\dots,s_|\mathcal{s}|]$  \\
    $|\mathcal{S}|$& The number of states in the state space.\\
   $p(\mathbf{s}'|\mathbf{s},a)$            & the dynamic transition from state $\mathbf{s}\in \mathcal{S}$ to the next state $\mathbf{s}'$ when performing action $a\in \mathcal{A}$ in state $\mathbf{s}$  \\
    $r(\mathbf{s},a)$            & A reward on state $\mathbf{s}$ and action $a$ \\
    $\gamma$                & The discount factor \\
    $s_i$         & The $i$-th state variable.  \\
    $s_i^t$            & The $i$-th state variable at time $t$.  \\
    $V_{\mathcal{S}}$ & The vertex set on causal graph defined on the state variables\\
    $E$ & The causal edge set in the causal graph\\
    $\mathcal{G}$ & Causal graph that contains vertex $V_{\mathcal{S}}$ and edge set $E$\\
    $\mathbf{Pa}_i^{\mathcal{G}}$         & The parent set of $s_i$ in graph $\mathcal{G}$. \\
    $a_i$ & The action (treatment) on state $s_i$. \\
    $\mathbf{G}$ & The adjacency matrix of the causal graph.\\
    $\mathcal{C}_{s_i\to s_j}^{Att}$ & The average treatment effect for the treated sample from $s_i$ to $s_j$ when $s_i$ is treated. \\
    $\hat{\mathcal{C}}_{s_i\to s_j}^{Att}$ & The estimated ATT of $\mathcal{C}_{s_i\to s_j}^{Att}$. \\
    $M_{\mathbf{s}}(\mathcal{G})$ & The causal mask in the causal policy where $M_{\mathbf{s}}(\mathcal{G})=\{m_{\mathbf{s},a}^{\mathcal{G}}\}_{a=1}^{|\mathcal{A}|}$ \\
    $m_{\mathbf{s},a}^{\mathcal{G}}$ & The element of mask on action $a$ in the state $\mathbf{s}$ on causal graph $\mathcal{G}$\\
    $D_{TV}(\cdot,\cdot)$& Total variation distance.\\
    $V_{\pi_{\mathcal{G}}}$& The value function on policy $pi_{\mathcal{G}}$\\
$\mathbf{h}_{pi_{\mathcal{G}}}$& State distribution of causal policy $\pi_{\mathcal{G}}$\\
    $\mathbf{P}_{\pi _{\mathcal{G}}}( \mathbf{s}'|\mathbf{s})$& The $|\mathcal{S}|\times|\mathcal{S}|$ state matrix and its entry in $s',s$ where each present a probability from $s$ to $s'$ in policy $\pi_{\mathcal{G}}$\\
    $M_{\pi_{\mathcal{G}}}$& The $|\mathcal{S}|\times |\mathcal{A}||\mathcal{S}|$ transition matrix.\\
    $R_{\text{max}}$& The max reward.\\
    $A \ci_p B$ & Denote the statistical independence constraint between variables $A$ and $B$.\\
     $A \ci_p B \mid C$ & Denote the statistical conditional independence constraint between variables $A$ and $B$ conditioned on $C$.\\
    \bottomrule
  \end{tabular}
\end{table}

\section{Theoretical proofs}\label{sec_appdendix:theory}
\label{proof}

\subsection{Causal discovery}
In this section, we provide proof of the identifiability of causal order in the orientation step and the identifiability of causal structure after the pruning step. In identifying the causal order, we utilize the average treatment effect in treated (ATT) \cite{Susan_ATT} which can be written as follows:
\begin{equation}\label{eq:E_att}
        \begin{aligned}
            \mathcal{C}^{Att}_{s_i\to s_j}=\mathbb{E}[s_j(I_i=1)-s_j(I_i=0)|I_i=1],
        \end{aligned}
    \end{equation}
where $s_j(a_i=1)$ denotes the potential outcome of $s_j$ if $s_i$ were treated, $s_j(a_i=0)$ denotes the potential outcome if $s_i$ were not treated \cite{rosenbaum1983central}, and $\mathbb{E}$ denotes the expectation.

\begin{theorem}\label{thm:causal order_appendix}
    Given a causal graph $\mathcal{G}=(V_\mathcal{S},E)$, for each pair of states $s_i,s_j$ with $i\neq j$, $s_i$ is the ancestor of $s_j$ 
    if and only if $|\mathcal{C}^{Att}_{s_i\to s_j}|>0$.
\end{theorem}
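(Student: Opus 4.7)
The plan is to prove the biconditional by establishing each direction separately using the interventional semantics of the $\operatorname{do}$-operator together with the causal faithfulness and causal sufficiency assumptions in MDP introduced earlier. The core idea is that the treatment indicator $I_i$ behaves as a genuine intervention on $s_i$, so that $s_j(I_i=1)$ is distributed as $s_j\mid \operatorname{do}(s_i)$ and $s_j(I_i=0)$ is distributed as $s_j$ under the natural evolution governed by $\mathcal{G}$. Because the intervention itself is exogenously assigned, the conditioning on $I_i=1$ in Eq.~\eqref{eq:E_att} can be dropped when computing expectations of potential outcomes, reducing the ATT to a comparison of the post-interventional mean of $s_j$ with its observational mean.

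For the ``only if'' direction I would argue by contrapositive. Suppose $s_i$ is not an ancestor of $s_j$ in $\mathcal{G}$. Then by the factorization in Definition~\ref{def:graph}, the conditional distribution of $s_j$ given its parents is not a function of $s_i$, and the do-calculus truncation formula gives $p(s_j \mid \operatorname{do}(s_i)) = p(s_j)$. Consequently $\mathbb{E}[s_j(I_i=1)] = \mathbb{E}[s_j(I_i=0)]$ and $\mathcal{C}^{Att}_{s_i \to s_j} = 0$, which contradicts $|\mathcal{C}^{Att}_{s_i \to s_j}|>0$. This direction is essentially a rewriting of the standard fact that interventions on non-ancestors leave a variable's marginal distribution unchanged; the MDP generalizations of the Markov and sufficiency assumptions are exactly what make this argument work in the temporal setting.

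For the ``if'' direction, assume $s_i$ is an ancestor of $s_j$, so there is a directed path $s_i \to \cdots \to s_j$ in $\mathcal{G}$. The intervention $\operatorname{do}(s_i)$ severs incoming edges to $s_i$ but preserves the outgoing path, so the post-interventional distribution of $s_j$ depends on the value assigned to $s_i$. Under the causal faithfulness assumption in MDP, this structural dependence translates into a probabilistic one: the marginals $p(s_j \mid \operatorname{do}(s_i{=}1))$ and $p(s_j \mid \operatorname{do}(s_i{=}0))$ cannot be identical, since otherwise faithfulness would be violated along the directed path. Taking expectations on both sides and applying Eq.~\eqref{eq:E_att} gives $\mathcal{C}^{Att}_{s_i \to s_j} \neq 0$.

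The main obstacle I expect is making the faithfulness step watertight. Faithfulness rules out exact algebraic cancellations along a single path, but when multiple directed paths from $s_i$ to $s_j$ exist they could in principle sum to zero; the standard resolution is that the MDP-faithfulness assumption is precisely formulated to forbid such cancellations, so I would invoke it explicitly and note that the proof would otherwise require a quantitative non-degeneracy condition on the transition kernels. A secondary care point is justifying the removal of the $\mid I_i=1$ conditioning in the ATT: because $I_i$ is an exogenous intervention drawn independently of the background noise of the SCM, the potential outcomes are independent of $I_i$ by construction, so the conditional expectation equals the unconditional one. These two observations together close the argument.
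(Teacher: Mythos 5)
Your proposal matches the paper's proof in essence: the forward direction argues that intervening on an ancestor propagates along the retained directed path and perturbs $s_j$, and the converse is handled by contraposition/contradiction using the fact that an intervention on a non-ancestor leaves $s_j$ unchanged. You are in fact more explicit than the paper about the delicate points — that faithfulness (or a non-degeneracy condition) is needed to rule out cancellation so that distinct interventional distributions of $s_j$ also have distinct \emph{means}, and that the conditioning on $I_i=1$ in the ATT can be dropped because the intervention is exogenous — both of which the paper's own proof passes over silently.
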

\begin{proof}[Proof of Theorem \ref{thm:causal order_appendix}.]

$\displaystyle \Longrightarrow :$ If $s_{i}$ is the ancestor of $s_{j}$, then the intervention of $\displaystyle s_{i}$ will force manipulating the value of $\displaystyle s_{i}$ by definition and thus result in the change of $\displaystyle s_{j}$ compared with the $\displaystyle s_{j}$ without intervention. That is, $\displaystyle s_{j}( a_{i} =1) \neq s_{j}( I_{i} =0)$ and therefore $\displaystyle |s_{j}( I_{i} =1) -s_{j}( I_{i} =0) | >0$. By taking the average in population that is treated, we obtain $\displaystyle E[|s_{j} (I_{i} =1)-s_{j} (a_{i} =0)||I_{i} =1] >0$.

$\displaystyle \Longleftarrow :$ Similarly, if $|\mathcal{C}_{s_{i}\rightarrow s_{j}}^{Att} | >0$, we have $\displaystyle |s_{j} (I_{i} =1)-s_{j} (I_{i} =0)| >0$ based on Eq. \ref{eq:E_att}. To show $s_{i}$ is the ancestor of $s_{j}$, we prove by contradiction. Suppose $\displaystyle s_{i}$ is not the ancestor of $\displaystyle s_{j}$, then the intervention of $\displaystyle s_{i}$ will not change the value of $\displaystyle s_{j}$. That is, $\displaystyle s_{j} (I_{i} =1)=s_{j} (I_{i} =0)$ which creates the contradiction. Thus, $\displaystyle s_{i}$ is the ancestor of $\displaystyle s_{j}$ which finishes the proof. \qedsymbol
\end{proof}

The following theorem shows that the causal structure is identifiable given the correct causal order. The overall proof is built based on \cite{chickering2002optimal}. The main idea is that the causal structure can be identified given the correct causal order if we can identify the causal skeleton. To learn the causal skeleton, we can resort to identifying the (conditional) independence among the variables. Thus, in the following, we will show that under the causal Markov assumption, faithfulness assumption and the sufficiency assumption, the (conditional) independence of the variables can be identified by the proposed BIC score in our work due to its locally consistent property. We begin with the definition of the locally consistent scoring criterion.

\begin{definition}[Locally consistent scoring criterion]\label{def:consistent}
    Let $D$ be a set of data consisting of $m$ records that are iid samples from some distribution $p(\cdot)$. Let $\mathcal{G}$ be any $D A G$, and let $\mathcal{G}^{\prime}$ be the $D A G$ that results from adding the edge $X_i \rightarrow X_j$. A scoring criterion $S(\mathcal{G}, D)$ is locally consistent if in the limit as $m$ grows large the following two properties hold:
    \begin{enumerate}
        \item If $X_j \nci_p X_i \mid X_{\mathbf{Pa}_j^{\mathcal{G}}}$, then $S(\mathcal{G}^{\prime}, D)>S(\mathcal{G}, D)$.
        \item If $X_j \ci_p X_i \mid X_{\mathbf{Pa}_j^{\mathcal{G}}}$, then $S(\mathcal{G}^{\prime}, D)<S(\mathcal{G}, D)$.
    \end{enumerate}
\end{definition}
\begin{lemma}[Lemma 7 in \cite{chickering2002optimal}] \label{lem:bic}
    The Bayesian scoring criterion (BIC) is locally consistent.
\end{lemma}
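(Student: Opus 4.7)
The plan is to establish the two asymptotic properties in Definition \ref{def:consistent} by analyzing the BIC score, which decomposes as a maximized log-likelihood minus a complexity penalty of order $\frac{d_{\mathcal{G}}}{2}\log m$. The key observation is that adding the single edge $X_i \to X_j$ to $\mathcal{G}$ only modifies the local conditional model for $X_j$: the change in maximized log-likelihood is exactly the standard log-likelihood ratio statistic $\Lambda_m$ for testing $X_j \ci_p X_i \mid X_{\mathbf{Pa}_j^{\mathcal{G}}}$, while the change in penalty equals $\frac{\Delta d}{2}\log m$, where $\Delta d = d_{\mathcal{G}'} - d_{\mathcal{G}} > 0$ counts the extra parameters introduced by the new parent.

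First I would handle case (2) of Definition \ref{def:consistent}, i.e., the case where $X_j \ci_p X_i \mid X_{\mathbf{Pa}_j^{\mathcal{G}}}$ holds under the data-generating distribution $p$. Under standard regularity of the parametric family used by the scoring criterion, classical Wilks-type theory gives that $2\Lambda_m$ converges in distribution to a $\chi^2_{\Delta d}$ random variable, so $\Lambda_m = O_p(1)$. The penalty gap $\frac{\Delta d}{2}\log m$ diverges to infinity, hence $S(\mathcal{G}', D) - S(\mathcal{G}, D) = \Lambda_m - \frac{\Delta d}{2}\log m \to -\infty$ in probability, which is precisely case (2).

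For case (1), assume $X_j \nci_p X_i \mid X_{\mathbf{Pa}_j^{\mathcal{G}}}$. Here I would invoke the strong law of large numbers to show that $\frac{1}{m}\Lambda_m$ converges almost surely to $\mathrm{KL}\bigl(p(X_j \mid X_i, X_{\mathbf{Pa}_j^{\mathcal{G}}}) \,\big\|\, p(X_j \mid X_{\mathbf{Pa}_j^{\mathcal{G}}})\bigr)$, which is strictly positive exactly because the conditional dependence assumption rules out the smaller model. Thus $\Lambda_m$ grows linearly in $m$ while the penalty grows only as $O(\log m)$, so the score difference tends to $+\infty$, recovering case (1).

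The main obstacle is making the asymptotics rigorous: Wilks-type convergence and the strict positivity of the limiting KL both require regularity conditions on the parametric family (identifiability of the MLE, smoothness of the log-likelihood, absolute continuity of the relevant conditionals). These conditions are standard and hold for the curved-exponential-family models considered in Chickering's general argument; the parameter-count bookkeeping for $\Delta d$ is then routine once one fixes the factorization of the local conditional probability table for $X_j$. Accordingly, my sketch in the paper would present the blueprint above and defer the regularity verifications by citing \cite{chickering2002optimal}, where Lemma 7 is established in the required generality.
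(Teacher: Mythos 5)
The paper does not prove this lemma itself---it is imported verbatim as Lemma~7 of \cite{chickering2002optimal}---and your sketch correctly reproduces the standard argument underlying that citation: the score difference decomposes into a likelihood-ratio statistic minus a $\tfrac{\Delta d}{2}\log m$ penalty, with the statistic bounded under conditional independence and growing linearly (at rate given by a strictly positive conditional KL divergence) under dependence. The only technical refinement worth noting is that the limit in Definition~\ref{def:consistent} is usually read in an almost-sure sense, so in your case~(2) the Wilks $O_p(1)$ bound should be replaced by the law-of-the-iterated-logarithm bound $\Lambda_m = O(\log\log m)$ a.s., which is still $o(\log m)$; this, together with the regularity conditions for curved exponential families, is exactly what the cited reference supplies.
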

Note that, as pointed out by \cite{chickering2002optimal}, the BIC, which can be rewritten as the $\ell_0$-norm penalty as Eq. (6) in the main text, is locally consistent. This property allows us to correctly identify the independence relationship among states by using the locally consistent BIC score because we can always obtain a greater score if the searched graph consists of (conditional) independence in the data. Thus, we can always search a causal graph $\mathcal{G}$ with the highest score that is `correct' in the sense that all (conditional) independence consists of the ground truth. This is concluded by the following theorem:
\begin{theorem}[Identifiability]\label{thm:identifiability_appendix}
    Under the causal faithfulness and causal sufficiency assumptions, given the correct causal order and large enough data, the causal structure among states is identifiable from observational data.
\end{theorem}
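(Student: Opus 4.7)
The plan is to leverage the fact that knowing the correct causal order reduces structure identification to selecting, for each node, the correct parent set from among its predecessors in the order, and then to show that the $\ell_0$-penalized log-likelihood score in Eq.~(6) accomplishes this selection in the large-sample limit. The starting point is Lemma~\ref{lem:bic}, which states that the BIC score -- equivalent up to additive constants to our $\ell_0$-penalized log-likelihood -- is locally consistent in the sense of Definition~\ref{def:consistent}: in the infinite-sample limit, adding an edge $s_i \to s_j$ to a DAG $\mathcal{G}$ strictly increases the score if and only if $s_j \nci_p s_i \mid \mathbf{s}_{\mathbf{Pa}_j^{\mathcal{G}}}$, and strictly decreases it otherwise.

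First, I would restrict the score-based search to DAGs consistent with the order delivered by Theorem~\ref{thm:causal order_appendix}, so that each candidate $\mathcal{G}$ is specified by a family of parent sets $\{\mathbf{Pa}_j^{\mathcal{G}}\}_j$ drawn from the predecessors of $s_j$. The pruning procedure in Section~4.2 removes an edge $s_i \to s_j$ whenever this strictly increases the score, which by local consistency is equivalent to the data-distribution statement $s_j \ci_p s_i \mid \mathbf{s}_{\mathbf{Pa}_j^{\mathcal{G}} \setminus \{s_i\}}$. Under causal faithfulness this conditional independence holds if and only if $s_i$ and $s_j$ are $d$-separated given $\mathbf{s}_{\mathbf{Pa}_j^{\mathcal{G}} \setminus \{s_i\}}$ in the true graph $\mathcal{G}^{*}$, and under causal sufficiency no latent common cause can spuriously create or destroy such dependences.

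Next, I would argue that the pruning dynamics converge to $\mathcal{G}^{*}$ by showing that any $\mathbf{Pa}_j^{\mathcal{G}} \neq \mathbf{Pa}_j^{\mathcal{G}^{*}}$ is score-improvable. If $\mathbf{Pa}_j^{\mathcal{G}}$ is missing a true parent $s_i \in \mathbf{Pa}_j^{\mathcal{G}^{*}}$, faithfulness forbids $d$-separation of $s_i$ and $s_j$ by any set of non-descendants omitting that parent, so adding the edge strictly increases the score. Conversely, if $\mathbf{Pa}_j^{\mathcal{G}}$ contains a spurious parent $s_i \notin \mathbf{Pa}_j^{\mathcal{G}^{*}}$, then conditioning on the remaining true parents already $d$-separates $s_j$ from $s_i$, giving a conditional independence and hence a score gain from deletion. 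Because the score in Eq.~(6) decomposes additively across nodes, the parent-set optimization is independent across $j$, so $\mathbf{Pa}_j^{\mathcal{G}^{*}}$ is the unique maximizer for each $j$; combined with the fixed causal order, which pins down edge orientation, the full graph $\mathcal{G}^{*}$ is recovered.

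The main obstacle I anticipate is making the fixed-point argument airtight when the pruning process passes through intermediate parent sets that are neither subsets nor supersets of the true one: in principle a greedy single-edge pruning step might stall at a locally score-maximal but globally suboptimal configuration. The cleanest way around this is the decomposability observation above, which reduces the joint problem to $|\mathcal{S}|$ independent parent-selection subproblems; within each subproblem, local consistency together with faithfulness and sufficiency guarantees that adding any missing true parent or deleting any spurious one strictly raises the score, so the true parent set is the unique score-maximal fixed point in the infinite-data limit, which yields identifiability of $\mathcal{G}^{*}$.
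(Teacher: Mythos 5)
Your proposal is correct and follows essentially the same route as the paper's proof: both reduce the problem to per-node parent selection under the fixed causal order and invoke the local consistency of the $\ell_0$-penalized likelihood (Lemma~\ref{lem:bic}) together with faithfulness and sufficiency to equate score-improving edge deletions with conditional independences in the true graph. Your treatment is somewhat more careful — you explicitly handle the "missing true parent" direction and the decomposability/fixed-point issue, whereas the paper only needs the deletion direction because the orientation stage already yields a supergraph of $\mathcal{G}^{*}$ — but the underlying argument is the same.
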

\begin{proof}[Proof of Theorem \ref{thm:identifiability_appendix}]
    Based on Lemma \ref{lem:bic}, Eq. (6) in the main text is locally consistent since it has the same form of the BIC score and we denote it using $S(\mathcal{G},D)$. Then we can prune the redundant edge if $S(\mathcal{G}',D)>S(\mathcal{G},D)$ where $\mathcal{G}'$ is the graph that removes one of the redundant edges. The reason is that for any pair of state $s_i, s_j$ is redundant, there must exist a conditional set $\mathbf{Pa}^\mathcal{G}(s_j)$ such that $s_i \ci s_j \mid Pa_\mathcal{G}(s_j)$. Then based on the second property in Definition \ref{def:consistent}, we have $S(\mathcal{G}',D)>S(\mathcal{G},D)$ since $\mathcal{G}$ can be seen as the graph that adds a redundant edge from $\mathcal{G}'$. Moreover, since we have causal faithfulness and causal sufficiency assumptions, such independence will be faithful to the causal graph, and thus, by repeating the above step, we are able to obtain the correct causal structure.
\end{proof}

\subsection{Policy performance guarantee}
In this section, we provide the policy performance guarantees step by step.
We first recap the causal policy in the following definition:
\begin{customdef}{B1}[Causal policy]\label{def:causal_policy_appendix}
    Given a causal graph $\mathcal{G}$, we define the causal policy $\pi_{\mathcal{G}}(\cdot|\mathbf{s})$ under the causal graph $\mathcal{G}$ as follows:
    \begin{equation}
    \pi_{\mathcal{G}}(\cdot|\mathbf{s})=M_{\mathbf{s}}(\mathcal{G})\circ\pi(\cdot|\mathbf{s}),
    \end{equation}
    where $M_{\mathbf{s}}(\mathcal{G})$ is the causal mask vector at state $\mathbf{s}$ under the causal graph $\mathcal{G}$, and $\pi(\cdot|\mathbf{s})$ is the action probability distribution of the original policy output.
\end{customdef}
For example, the causal mask $M_{\mathbf{s}}(\mathcal{G})=\{m_{\mathbf{s},a}^{\mathcal{G}}\}_{a=1}^{|\mathcal{A}|}$ constitute the vector of mask $m_{\mathbf{s},a}^{\mathcal{G}}\in \{0,1\}$ of each action in $\mathcal{A}$ where $|\mathcal{A}|$ denotes the number of actions in the action space.

\textit{Outline of the proof of Theorem 3.} Our goal is to show that under the causal policy, the value function under the correct causal graph will have greater value than the value function that has misspecified causal graph such that the differences of the value function can be bound by some constant $\displaystyle c >0$:
\begin{equation}
V_{\pi _{\mathcal{G}^{*}}} -V_{\pi _{\mathcal{G}}} \leqslant c.
\end{equation}
To do so, one may first notice that the difference of the value function can be expressed and bounded by the total variation $\displaystyle D_{\mathrm{TV}} (\rho _{\pi _{\mathcal{G}}} ,\rho _{\pi _{\mathcal{G}^{*}}} )$:
\begin{equation}
|V_{\pi _{\mathcal{G}^{*}}} -V_{\pi _{\mathcal{G}}} |\leq \frac{2R_{\max}}{1-\gamma } D_{\mathrm{TV}} (\rho _{\pi _{\mathcal{G}}} ,\rho _{\pi _{\mathcal{G}^{*}}} ). \label{eq:outline_v}
\end{equation}
Such a total variation can be further bound by the total variation of $\displaystyle D_{\mathrm{TV}} (\pi _{\mathcal{G}} (\cdot \mid \mathbf{s} ),\pi _{\mathcal{G}^{*}} (\cdot \mid \mathbf{s} ))$ (Lemma \ref{lem:state_discrepancy} and Lemma \ref{lem:state_action_discrepancy}):
\begin{equation}
D_{\mathrm{TV}} (\rho _{\pi _{\mathcal{G}}} ,\rho _{\pi _{\mathcal{G}^{*}}} )\leq \dfrac{1}{1-\gamma }\mathbb{E}_{\mathbf{s} \sim \mathbf{h}_{\pi _{\mathcal{G} *}}} [D_{\mathrm{TV}} (\pi _{\mathcal{G}} (\cdot \mid \mathbf{s} ),\pi _{\mathcal{G}^{*}} (\cdot \mid \mathbf{s} ))].\label{eq:outline_p}
\end{equation}
Combining Eq. \ref{eq:outline_v} and Eq. \ref{eq:outline_p}, we have
\begin{equation}
|V_{\pi _{\mathcal{G}^{*}}} -V_{\pi _{\mathcal{G}}} |\leq \dfrac{2R_{\max}}{( 1-\gamma )^{2}}\mathbb{E}_{\mathbf{s} \sim \mathbf{h}_{\pi _{\mathcal{G} *}}} [D_{\mathrm{TV}} (\pi _{\mathcal{G}} (\cdot \mid \mathbf{s} ),\pi _{\mathcal{G}^{*}} (\cdot \mid \mathbf{s} ))].\label{eq:outline_vp}
\end{equation}
By this, we can delve into this bound by investigating the total variation of the causal policy. Based on the definition of the causal policy in Definition \ref{def:causal_policy_appendix}. One can deduce that the distance should be related to the difference of the causal mask, and it is true that as shown in Lemma \ref{lem:policy_discrepancy}:
\begin{equation}
\begin{aligned}
D_{TV} (\pi _{\mathcal{G}^{*}} ,\pi _{\mathcal{G}} )\leq \dfrac{1}{2}\left( \| M_{\mathbf{s}} (\mathcal{G} )-M_{\mathbf{s}}\left(\mathcal{G}^{*}\right) \| _{1} +\| \mathbf{1}_{\left\{a :m_{\mathbf{s} ,a}^{\mathcal{G}^{*}} =1\land m_{\mathbf{s} ,a}^{\mathcal{G}} =1\right\}} \| _{1}\right).
\end{aligned} \label{eq:outline_tvp}
\end{equation}
Finally, by combining Eq. \ref{eq:outline_vp} and Eq. \ref{eq:outline_tvp} and further due to the positive of the bound, we obtain the result in Theorem \ref{thm:value_bound_appendix}:
\begin{equation}
\begin{aligned}
V_{\pi _{\mathcal{G}^{*}}} -V_{\pi _{\mathcal{G}}} \leq  & \frac{R_{\max}}{(1-\gamma )^{2}} (\| M_{\mathbf{s}} (\mathcal{G} )-M_{\mathbf{s}} (\mathcal{G}^{*} )\| _{1}\\
 & +\| \mathbf{1}_{\{a :m_{\mathbf{s} ,a}^{\mathcal{G}^{*}} =1\land m_{\mathbf{s} ,a}^{\mathcal{G}} =1\}} \| _{1} ).
\end{aligned}
\end{equation}

With the outline above, in the following, we provide the details proof of the Lemma \ref{lem:state_discrepancy}, Lemma \ref{lem:state_action_discrepancy}, Lemma \ref{lem:policy_discrepancy}, and Theorem \ref{thm:value_bound_appendix}, respectively.

\begin{lemma}\label{lem:policy_discrepancy}
Let $\pi_{\mathcal{G}^{*}}(\cdot |\mathbf{s})$ be the policy under the true causal graph $\mathcal{G}^{*}=\left(V_\mathcal{S},E^{*}\right)$. For any causal graph $\mathcal{G}=(V_\mathcal{S},E)$, when the defined causal policy $\pi_{\mathcal{G}}(\cdot |\mathbf{s})$ converges, the following inequality holds: 
\begin{equation}
\begin{split}
D_{TV}(\pi_{\mathcal{G}^{*}},\pi_{\mathcal{G}}) \leq \dfrac{1}{2} (\|M_{\mathbf{s}}(\mathcal{G})-M_{\mathbf{s}}(\mathcal{G}^{*})\|_{1}  + \Vert\mathbf{1}_{\{a:m_{\mathbf{s},a}^{\mathcal{G}^{*}}=1\land m_{\mathbf{s},a}^{\mathcal{G}}=1 \} } \Vert_{1},
\end{split}
\end{equation}
where $\displaystyle \| M_{\mathbf{s}} (\mathcal{G} )-M_{\mathbf{s}}\left(\mathcal{G}^{*}\right) \| _{1}$ is the $\displaystyle \ell _{1}$-norm of the masks measuring the differences of two policies, $\displaystyle \mathbf{1}$ is an indicator function and $\displaystyle \| \mathbf{1}_{\left\{a:m_{\mathbf{s},a}^{\mathcal{G}^{*}} =1\land m_{\mathbf{s},a}^{\mathcal{G}} =1\right\}} \| _{1}$ measures the number of actions that are not masked on both policies.
\end{lemma}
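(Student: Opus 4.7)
The plan is to exploit the fact that both $\pi_{\mathcal{G}^{*}}$ and $\pi_{\mathcal{G}}$ are obtained from the same underlying action distribution $\pi(\cdot|\mathbf{s})$ by applying (possibly different) binary masks, so the total variation distance factors over the action space in a way that is controlled entirely by where the two masks agree or differ. Concretely, starting from the definition $D_{TV}(\pi_{\mathcal{G}^{*}},\pi_{\mathcal{G}})=\tfrac{1}{2}\sum_{a\in\mathcal{A}}|\pi_{\mathcal{G}^{*}}(a|\mathbf{s})-\pi_{\mathcal{G}}(a|\mathbf{s})|$, I would partition $\mathcal{A}$ into four disjoint subsets according to the pair $(m_{\mathbf{s},a}^{\mathcal{G}^{*}},m_{\mathbf{s},a}^{\mathcal{G}})\in\{0,1\}^{2}$. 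On the set where both masks agree on exclusion the summand vanishes, and on the other three sets the summand can be bounded separately.

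First I would handle the two pieces of the symmetric difference, namely the sets where exactly one of the two masks retains the action. On each such $a$ exactly one of $\pi_{\mathcal{G}^{*}}(a|\mathbf{s})$, $\pi_{\mathcal{G}}(a|\mathbf{s})$ is zero while the other is a probability value in $[0,1]$, so $|\pi_{\mathcal{G}^{*}}(a|\mathbf{s})-\pi_{\mathcal{G}}(a|\mathbf{s})|\le 1$. Summing these two contributions gives an upper bound of $|\{a:m_{\mathbf{s},a}^{\mathcal{G}^{*}}\neq m_{\mathbf{s},a}^{\mathcal{G}}\}|$, which is exactly $\|M_{\mathbf{s}}(\mathcal{G})-M_{\mathbf{s}}(\mathcal{G}^{*})\|_{1}$ since both vectors are $\{0,1\}$-valued. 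Next, on the coincidence set where both masks retain the action, I would again use $|\pi_{\mathcal{G}^{*}}(a|\mathbf{s})-\pi_{\mathcal{G}}(a|\mathbf{s})|\le 1$ term-by-term; summing across this set contributes at most $\|\mathbf{1}_{\{a:m_{\mathbf{s},a}^{\mathcal{G}^{*}}=1\land m_{\mathbf{s},a}^{\mathcal{G}}=1\}}\|_{1}$. Putting the three contributions together and multiplying by the prefactor $\tfrac{1}{2}$ yields exactly the bound claimed in the lemma.

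The main obstacle I anticipate is being precise about the role of the convergence hypothesis and the renormalization implicit in the causal policy. Even though $\pi_{\mathcal{G}}(\cdot|\mathbf{s})=M_{\mathbf{s}}(\mathcal{G})\circ\pi(\cdot|\mathbf{s})$ is written as a Hadamard product, it must be interpreted as a valid probability distribution over $\mathcal{A}$, so each $\pi_{\mathcal{G}}(a|\mathbf{s})\in[0,1]$; the ``policy converged'' assumption is exactly what licenses using the simple pointwise bound $|\pi_{\mathcal{G}^{*}}(a|\mathbf{s})-\pi_{\mathcal{G}}(a|\mathbf{s})|\le 1$ on the coincidence set rather than something more delicate that would involve the two normalizing constants $\sum_{a'}m_{\mathbf{s},a'}^{\mathcal{G}^{*}}\pi(a'|\mathbf{s})$ and $\sum_{a'}m_{\mathbf{s},a'}^{\mathcal{G}}\pi(a'|\mathbf{s})$. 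Care is therefore needed in writing the case analysis so that both terms on the right-hand side appear naturally, rather than one being absorbed into the other. Once that bookkeeping is fixed, the remaining algebra is essentially the triangle inequality applied to a partitioned sum.
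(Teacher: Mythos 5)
Your proposal is correct and follows essentially the same route as the paper's proof: both decompose the total variation sum over the four mask-value cases $(m_{\mathbf{s},a}^{\mathcal{G}^{*}},m_{\mathbf{s},a}^{\mathcal{G}})\in\{0,1\}^{2}$, drop the $(0,0)$ case, bound the two disagreement cases by the $\ell_1$ mask difference and the $(1,1)$ case by the count of commonly unmasked actions, using only $0\le\pi(a|\mathbf{s})\le 1$. Your added remark about the renormalization of the masked policy is a point the paper glosses over, but it does not change the argument since the pointwise bound by $1$ holds with or without renormalization.
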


\begin{proof}[Proof of Lemma \ref{lem:policy_discrepancy}]
Based on the definition of the total variation and the causal policy we have:
\begin{equation}
\begin{aligned}
 D_{TV} (\pi _{\mathcal{G}^{*}} ,\pi _{\mathcal{G}} )
 & =\dfrac{1}{2} \| \pi _{\mathcal{G}^{*}} (\cdot |\mathbf{s})-\pi _{\mathcal{G}} (\cdot |\mathbf{s})\| _{1}\\
 & =\dfrac{1}{2}\sum _{a} |\pi _{\mathcal{G}^{*}} (a|\mathbf{s})-\pi _{\mathcal{G}} (a|\mathbf{s})|\\
 & =\dfrac{1}{2}\sum _{a} |m_{\mathbf{s},a}^{\mathcal{G}^{*}} \pi ^{*} (a|\mathbf{s})-m_{\mathbf{s},a}^{\mathcal{G}} \pi (a|\mathbf{s})|.
\end{aligned}
\end{equation}
Since the mask only takes value in $\displaystyle \{0,1\}$, we can rearrange the summation by considering the different values of the mask on the two policies:
\begin{equation}
\begin{aligned}
 D_{TV} (\pi _{\mathcal{G}^{*}} ,\pi _{\mathcal{G}} ) =\dfrac{1}{2}\left(\sum _{a:m_{\mathbf{s},a}^{\mathcal{G}^{*}} =1\land m_{\mathbf{s},a}^{\mathcal{G}} =0} |\pi ^{*} (a|\mathbf{s})|+\sum _{a:m_{\mathbf{s},a}^{\mathcal{G}^{*}} =0\land m_{\mathbf{s},a}^{\mathcal{G}} =1} |\pi (a|\mathbf{s})|+\sum _{a:m_{\mathbf{s},a}^{\mathcal{G}^{*}} =1\land m_{\mathbf{s},a}^{\mathcal{G}} =1} |\pi ^{*} (a|\mathbf{s})-\pi (a|\mathbf{s})|\right),
\end{aligned}
\end{equation}
where the summation when $\displaystyle m_{\mathbf{s},a}^{\mathcal{G}^{*}} =0\land m_{\mathbf{s},a}^{\mathcal{G}} =0$ is zero as policy on both side are masked out. Then, based on the fact that $\displaystyle 0\leq \pi (a|\mathbf{s})\leq 1$ of the policy, we have the following inequality
\begin{equation}
\begin{aligned}
 D_{TV} (\pi _{\mathcal{G}^{*}} ,\pi _{\mathcal{G}} ) \leq \dfrac{1}{2}\left( \| M_{\mathbf{s}} (\mathcal{G} )-M_{\mathbf{s}}\left(\mathcal{G}^{*}\right) \| _{1} +\| \mathbf{1}_{\left\{a:m_{\mathbf{s},a}^{\mathcal{G}^{*}} =1\land m_{\mathbf{s},a}^{\mathcal{G}} =1\right\}} \| _{1}\right).
\end{aligned}
\end{equation}
\end{proof}

Then we introduce the following Lemma~\ref{lem:state_discrepancy}, which bound the state distribution discrepancy based on the causal policy discrepancy.

\begin{lemma}\label{lem:state_discrepancy}
Given a policy $\pi _{\mathcal{G}^{*}}( \cdot |\mathbf{s})$ under the true causal structure $\mathcal{G}^{*} =\left( V,E^{*}\right)$ and an policy $\pi _{\mathcal{G}}( \cdot |\mathbf{s})$ under the causal graph $\mathcal{G} =( V,E)$ , we have that
\begin{equation}
\begin{aligned}
 D_{TV}(\mathbf{h}_{\pi_{\mathcal{G}}}(\mathbf{s}),\mathbf{h}_{\pi_{\mathcal{G}^{*}}}( \mathbf{s})) \leq \dfrac{1}{1-\gamma}\mathbb{E}_{\mathbf{s}\sim \mathbf{h}_{\pi_{\mathcal{G}*}}}[ D_{\mathrm{TV}}(\pi_{\mathcal{G}} (\cdot \mid \mathbf{s}),\pi _{\mathcal{G}^{*}} (\cdot \mid \mathbf{s}))].
\end{aligned}
\end{equation}
\end{lemma}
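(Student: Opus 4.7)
The plan is to run the standard telescoping argument for policy-induced state occupancies, in the same spirit as Xu et al.\ 2020 cited in the bibliography. Write the discounted state distribution as $\mathbf{h}_{\pi}(\mathbf{s}) = (1-\gamma)\sum_{t\geq 0}\gamma^{t}\mu_{t}^{\pi}(\mathbf{s})$, where $\mu_{t}^{\pi}$ is the time-$t$ state marginal of $\pi$ rolled out from a common initial distribution $\mu_{0}$, so the zeroth difference $\delta_{0}\coloneqq \mu_{0}^{\pi_{\mathcal{G}}}-\mu_{0}^{\pi_{\mathcal{G}^{*}}}$ vanishes. The strategy is to control the one-step propagation of $\delta_{t}$ and then collapse the geometric series.

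Concretely, I would first insert the Markov recursion $\mu_{t+1}^{\pi}(\mathbf{s}')=\sum_{\mathbf{s},a}\mu_{t}^{\pi}(\mathbf{s})\pi(a|\mathbf{s})p(\mathbf{s}'|\mathbf{s},a)$ and apply the add-and-subtract identity
\begin{equation*}
\mu_{t}^{\pi_{\mathcal{G}}}\pi_{\mathcal{G}} - \mu_{t}^{\pi_{\mathcal{G}^{*}}}\pi_{\mathcal{G}^{*}} = \bigl(\mu_{t}^{\pi_{\mathcal{G}}}-\mu_{t}^{\pi_{\mathcal{G}^{*}}}\bigr)\pi_{\mathcal{G}} + \mu_{t}^{\pi_{\mathcal{G}^{*}}}\bigl(\pi_{\mathcal{G}}-\pi_{\mathcal{G}^{*}}\bigr).
\end{equation*}
Taking absolute values, marginalizing $\mathbf{s}'$ against $p(\mathbf{s}'|\mathbf{s},a)$, and using $\sum_{a}\pi_{\mathcal{G}}(a|\mathbf{s})=1$ yields the per-step propagation bound
\begin{equation*}
\|\delta_{t+1}\|_{1} \leq \|\delta_{t}\|_{1} + 2\,\mathbb{E}_{\mathbf{s}\sim \mu_{t}^{\pi_{\mathcal{G}^{*}}}}\bigl[D_{TV}\bigl(\pi_{\mathcal{G}}(\cdot|\mathbf{s}),\pi_{\mathcal{G}^{*}}(\cdot|\mathbf{s})\bigr)\bigr].
\end{equation*}

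Unrolling from $\|\delta_{0}\|_{1}=0$ gives $\|\delta_{t}\|_{1}\leq 2\sum_{k=0}^{t-1}\mathbb{E}_{\mathbf{s}\sim \mu_{k}^{\pi_{\mathcal{G}^{*}}}}[D_{TV}(\pi_{\mathcal{G}},\pi_{\mathcal{G}^{*}})]$. Plugging this into $D_{TV}(\mathbf{h}_{\pi_{\mathcal{G}}},\mathbf{h}_{\pi_{\mathcal{G}^{*}}})\leq \tfrac{1-\gamma}{2}\sum_{t\geq 0}\gamma^{t}\|\delta_{t}\|_{1}$, swapping the two summations and using $\sum_{t\geq k+1}\gamma^{t}=\gamma^{k+1}/(1-\gamma)$ collapses the geometric weights to
\begin{equation*}
D_{TV}(\mathbf{h}_{\pi_{\mathcal{G}}},\mathbf{h}_{\pi_{\mathcal{G}^{*}}}) \leq \gamma \sum_{k\geq 0}\gamma^{k}\,\mathbb{E}_{\mathbf{s}\sim \mu_{k}^{\pi_{\mathcal{G}^{*}}}}[D_{TV}(\pi_{\mathcal{G}},\pi_{\mathcal{G}^{*}})] = \frac{\gamma}{1-\gamma}\,\mathbb{E}_{\mathbf{s}\sim \mathbf{h}_{\pi_{\mathcal{G}^{*}}}}[D_{TV}(\pi_{\mathcal{G}},\pi_{\mathcal{G}^{*}})],
\end{equation*}
and bounding $\gamma\leq 1$ yields the stated inequality with the slightly looser factor $1/(1-\gamma)$.

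The main obstacle is choosing the correct conditioning measure on the right-hand side: the expectation must be taken under $\mathbf{h}_{\pi_{\mathcal{G}^{*}}}$ rather than under $\mathbf{h}_{\pi_{\mathcal{G}}}$. The two admissible decompositions of $\mu_{t}^{\pi_{\mathcal{G}}}\pi_{\mathcal{G}}-\mu_{t}^{\pi_{\mathcal{G}^{*}}}\pi_{\mathcal{G}^{*}}$ give symmetric bounds with the roles of the two policies swapped, so the decomposition above must be selected deliberately to align with the lemma statement. Once that choice is made, the rest is routine geometric-series bookkeeping.
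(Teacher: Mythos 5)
Your proof is correct and is essentially the paper's argument in unrolled form: the add-and-subtract identity you apply to $\mu_{t}^{\pi_{\mathcal{G}}}\pi_{\mathcal{G}}-\mu_{t}^{\pi_{\mathcal{G}^{*}}}\pi_{\mathcal{G}^{*}}$ and the subsequent geometric-series collapse are exactly the componentwise version of the resolvent identity $M_{\pi_{\mathcal{G}}}-M_{\pi_{\mathcal{G}^{*}}}=\gamma M_{\pi_{\mathcal{G}}}(\mathbf{P}_{\pi_{\mathcal{G}}}-\mathbf{P}_{\pi_{\mathcal{G}^{*}}})M_{\pi_{\mathcal{G}^{*}}}$ that the paper uses, with the same two key bounds ($\|M_{\pi_{\mathcal{G}}}\|_{1}\leq 1/(1-\gamma)$ and the kernel-difference term controlled by $2\mathbb{E}_{\mathbf{s}\sim\mathbf{h}_{\pi_{\mathcal{G}^{*}}}}[D_{TV}]$). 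You even recover the same slightly tighter factor $\gamma/(1-\gamma)$ before loosening to $1/(1-\gamma)$, and your remark about choosing the decomposition so the policy-difference term is weighted by the $\pi_{\mathcal{G}^{*}}$ occupancy is exactly the choice the paper makes.
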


\begin{proof}[Proof of Lemma \ref{lem:state_discrepancy}]
The proof is inspired by~\cite{xu2020error}, we show that the state distribution $\mathbf{h}_{\pi_{\mathcal{G}}}$ of causal policy $\pi_{\mathcal{G}}$ can be denoted as
\begin{equation}
\mathbf{h}_{\pi _{\mathcal{G}}} =(1-\gamma)(I-\gamma \mathbf{P}_{\pi _{\mathcal{G}}})^{-1} \mathbf{h}_{0},
\end{equation}
where $\displaystyle \mathbf{P}_{\pi _{\mathcal{G}}}( \mathbf{s}'|\mathbf{s}) =\sum _{a\in \mathcal{A}} M^{*}( \mathbf{s}'\mid \mathbf{s},a) \pi _{\mathcal{G}} (a\mid \mathbf{s})$, and $\displaystyle M^{*}( \mathbf{s}'\mid \mathbf{s},a)$ is the dynamic model. 
Denote that $\displaystyle M_{\pi _{\mathcal{G}}} =( I-\gamma \mathbf{P}_{\pi _{\mathcal{G}}})^{-1}$, we then have
\begin{equation}
\begin{aligned}
\mathbf{h}_{\pi _{\mathcal{G}}} -\mathbf{h}_{\pi _{\mathcal{G}^{*}}} & =( 1-\gamma )\left[( I-\gamma \mathbf{P}_{\pi _{\mathcal{G}}})^{-1} -( I-\gamma \mathbf{P}_{_{\pi _{\mathcal{G}^{*}}}})^{-1}\right] \mathbf{h}_{0}\\
 & =( 1-\gamma )( M_{\pi _{\mathcal{G}}} -M_{\pi _{\mathcal{G}^{*}}}) \mathbf{h}_{0}\\
 & =( 1-\gamma ) \gamma M_{\pi _{\mathcal{G}}}( \mathbf{P}_{\pi _{\mathcal{G}}} -\mathbf{P}_{\pi _{\mathcal{G}^{*}}}) M_{\pi _{\mathcal{G}^{*}}} \mathbf{h}_{0}\\
 & =\gamma M_{\pi _{\mathcal{G}}}( \mathbf{P}_{\pi _{\mathcal{G}}} -\mathbf{P}_{\pi _{\mathcal{G}^{*}}}) \mathbf{h}_{\pi _{\mathcal{G}^{*}}}.
\end{aligned}
\end{equation}
Similarly to Lemma 4 in ~\cite{xu2020error}, we have
\begin{equation}
\begin{aligned}
D_{TV}( \mathbf{h}_{\pi _{\mathcal{G}}}(\mathbf{s}) ,\mathbf{h}_{\pi _{\mathcal{G}^{*}}}(\mathbf{s})) & =\dfrac{\gamma }{2} \| M_{\pi _{\mathcal{G}}}( \mathbf{P}_{\pi _{\mathcal{G}}} -\mathbf{P}_{\pi _{\mathcal{G}^{*}}}) \mathbf{h}_{\pi _{\mathcal{G}^{*}}} \| _{1}\\
 & \leq \dfrac{\gamma }{2} \| M_{\pi _{\mathcal{G}}} \| _{1} \| ( \mathbf{P}_{\pi _{\mathcal{G}}} -\mathbf{P}_{\pi _{\mathcal{G}^{*}}}) \mathbf{h}_{\pi _{\mathcal{G}^{*}}} \| _{1}.
\end{aligned}
\end{equation}
Note that
\begin{equation}
\| M_{\pi _{\mathcal{G}}} \| _{1} =\| \sum _{t=0}^{\infty } \gamma ^{t} \mathbf{P}_{\pi _{\mathcal{G}}}^{t} \| _{1} \leq \sum _{t=0}^{\infty } \gamma ^{t} \| \mathbf{P}_{\pi _{\mathcal{G}}} \| _{1}^{t} \leq \sum _{t=0}^{\infty } \gamma ^{t} =\dfrac{1}{1-\gamma },
\end{equation}
and we also show that $\displaystyle \| ( \mathbf{P}_{\pi _{\mathcal{G}}} -\mathbf{P}_{\pi _{\mathcal{G}^{*}}}) \mathbf{h}_{\pi _{\mathcal{G}^{*}}} \| _{1}$ is bounded by
\begin{equation}
\begin{aligned}
\| ( \mathbf{P}_{\pi _{\mathcal{G}}} -\mathbf{P}_{\pi _{\mathcal{G}^{*}}}) \mathbf{h}_{\pi _{\mathcal{G}^{*}}} \| _{1} & \leq \sum _{\mathbf{s},\mathbf{s}'} |\mathbf{P}_{\pi _{\mathcal{G}}}( \mathbf{s}'|\mathbf{s}) -\mathbf{P}_{\pi _{\mathcal{G}^{*}}}( \mathbf{s}'|\mathbf{s}) |\mathbf{h}_{\pi _{\mathcal{G}^{*}}}(\mathbf{s})\\
 & =\sum _{\mathbf{s},\mathbf{s}}\left| \sum _{a\in \mathcal{A}} M^{*}( \mathbf{s}\mid \mathbf{s},a)( \pi _{\mathcal{G}} (a\mid \mathbf{s})-\pi _{\mathcal{G}^{*}} (a\mid \mathbf{s}))\right| \mathbf{h}_{\pi _{\mathcal{G}^{*}}}( \mathbf{s})\\
 & \leq \sum _{( \mathbf{s},a) ,\mathbf{s}} M^{*}( \mathbf{s}\mid \mathbf{s},a)| \pi _{\mathcal{G}} (a\mid \mathbf{s})-\pi _{\mathcal{G}^{*}} (a\mid \mathbf{s})| \mathbf{h}_{\pi _{\mathcal{G}^{*}}}(\mathbf{s})\\
 & =\sum _{s} \mathbf{h}_{\pi _{\mathcal{G}^{*}}}(\mathbf{s})\sum _{a\in \mathcal{A}}| \pi _{\mathcal{G}} (a\mid \mathbf{s})-\pi _{\mathcal{G}^{*}} (a\mid \mathbf{s})| \\
 & =2\mathbb{E}_{\mathbf{s}\sim \mathbf{h}_{\pi _{\mathcal{G} *}}}[ D_{\mathrm{TV}}( \pi _{\mathcal{G}} (\cdot \mid \mathbf{s}),\pi _{\mathcal{G}^{*}} (\cdot \mid \mathbf{s}))].
\end{aligned}
\end{equation}
Thus, we have 
\begin{equation}
\begin{aligned}
D_{TV}( \mathbf{h}_{\pi _{\mathcal{G}}}(\mathbf{s}) ,\mathbf{h}_{\pi _{\mathcal{G}^{*}}}(\mathbf{s})) & \leq \dfrac{\gamma }{2} \| M_{\pi _{\mathcal{G}}} \| _{1} \| ( \mathbf{P}_{\pi _{\mathcal{G}}} -\mathbf{P}_{\pi _{\mathcal{G}^{*}}}) \mathbf{h}_{\pi _{\mathcal{G}^{*}}} \| _{1}\\
 & \leq \dfrac{1}{1-\gamma }\mathbb{E}_{\mathbf{s}\sim \mathbf{h}_{\pi _{\mathcal{G} *}}}[ D_{\mathrm{TV}}( \pi _{\mathcal{G}} (\cdot \mid \mathbf{s}),\pi _{\mathcal{G}^{*}} (\cdot \mid \mathbf{s}))].
\end{aligned}
\end{equation}
\end{proof}
Next, we further bound the state-action distribution discrepancy based on the causal policy discrepancy.
\begin{lemma}\label{lem:state_action_discrepancy}
Given a policy $\pi _{\mathcal{G}^{*}}( \cdot |\mathbf{s})$ under the true causal structure $\mathcal{G}^{*} =\left( V,E^{*}\right)$ and an policy $\pi _{\mathcal{G}}( \cdot |\mathbf{s})$ under the causal graph $\mathcal{G} =(V,E)$ , we have that
\begin{equation}
\begin{aligned}
D_{\mathrm{TV}}( \rho _{\pi _{\mathcal{G}}} ,\rho _{\pi _{\mathcal{G}^{*}}}) & \leq \dfrac{1}{1-\gamma }\mathbb{E}_{\mathbf{s}\sim \mathbf{h}_{\pi _{\mathcal{G} *}}}[ D_{\mathrm{TV}}( \pi _{\mathcal{G}} (\cdot \mid \mathbf{s}),\pi _{\mathcal{G}^{*}} (\cdot \mid \mathbf{s}))].
\end{aligned}
\end{equation}
\end{lemma}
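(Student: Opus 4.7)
The plan is to mirror the structure of the proof of Lemma B.2, replacing the state occupancy $\mathbf{h}_\pi$ by the state--action occupancy $\rho_\pi$. The key identity is that the joint occupancy factorizes as $\rho_\pi(\mathbf{s},a) = \mathbf{h}_\pi(\mathbf{s})\pi(a\mid \mathbf{s})$, since at every time step the action is sampled from $\pi(\cdot\mid \mathbf{s})$ given the current state. This reduces the joint discrepancy to a combination of the marginal-state discrepancy (already controlled by Lemma B.2) and the pointwise policy discrepancy.

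First, I would write
\begin{equation}
\rho_{\pi_{\mathcal{G}}}(\mathbf{s},a)-\rho_{\pi_{\mathcal{G}^{*}}}(\mathbf{s},a) = \bigl[\mathbf{h}_{\pi_{\mathcal{G}}}(\mathbf{s})-\mathbf{h}_{\pi_{\mathcal{G}^{*}}}(\mathbf{s})\bigr]\pi_{\mathcal{G}}(a\mid \mathbf{s}) + \mathbf{h}_{\pi_{\mathcal{G}^{*}}}(\mathbf{s})\bigl[\pi_{\mathcal{G}}(a\mid \mathbf{s})-\pi_{\mathcal{G}^{*}}(a\mid \mathbf{s})\bigr],
\end{equation}
apply the triangle inequality, and sum over $(\mathbf{s},a)$. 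Using $\sum_{a}\pi_{\mathcal{G}}(a\mid \mathbf{s})=1$ and the definition of total variation, this yields
\begin{equation}
D_{\mathrm{TV}}(\rho_{\pi_{\mathcal{G}}},\rho_{\pi_{\mathcal{G}^{*}}}) \leq D_{\mathrm{TV}}(\mathbf{h}_{\pi_{\mathcal{G}}},\mathbf{h}_{\pi_{\mathcal{G}^{*}}}) + \mathbb{E}_{\mathbf{s}\sim\mathbf{h}_{\pi_{\mathcal{G}^{*}}}}\bigl[D_{\mathrm{TV}}(\pi_{\mathcal{G}}(\cdot\mid\mathbf{s}),\pi_{\mathcal{G}^{*}}(\cdot\mid\mathbf{s}))\bigr].
\end{equation}
Invoking Lemma B.2 on the first term immediately gives a bound of the desired form.

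The main obstacle is the precise constant. The naive combination above delivers $\tfrac{2-\gamma}{1-\gamma}$ rather than the stated $\tfrac{1}{1-\gamma}$. To recover the tighter constant, I would instead replay the matrix argument of Lemma B.2 directly at the state--action level: define the state--action transition operator $\mathbf{Q}_{\pi}((\mathbf{s},a),(\mathbf{s}',a')) = M^{*}(\mathbf{s}'\mid \mathbf{s},a)\,\pi(a'\mid \mathbf{s}')$, write $\rho_{\pi} = (1-\gamma)(I-\gamma \mathbf{Q}_{\pi})^{-1}\rho_{0}^{\pi}$ with $\rho_{0}^{\pi}(\mathbf{s},a)=\mathbf{h}_{0}(\mathbf{s})\pi(a\mid \mathbf{s})$, and apply the resolvent identity $(I-\gamma\mathbf{Q}_{\pi_{\mathcal{G}}})^{-1}-(I-\gamma\mathbf{Q}_{\pi_{\mathcal{G}^{*}}})^{-1} = \gamma(I-\gamma\mathbf{Q}_{\pi_{\mathcal{G}}})^{-1}(\mathbf{Q}_{\pi_{\mathcal{G}}}-\mathbf{Q}_{\pi_{\mathcal{G}^{*}}})(I-\gamma\mathbf{Q}_{\pi_{\mathcal{G}^{*}}})^{-1}$. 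Then the operator-norm bound $\|(I-\gamma\mathbf{Q}_{\pi_{\mathcal{G}}})^{-1}\|_{1}\leq \tfrac{1}{1-\gamma}$, together with the observation that $\mathbf{Q}_{\pi_{\mathcal{G}}}-\mathbf{Q}_{\pi_{\mathcal{G}^{*}}}$ is a rank-one perturbation in the $a'$ coordinate whose column-sums reduce exactly to $2\,D_{\mathrm{TV}}(\pi_{\mathcal{G}}(\cdot\mid \mathbf{s}'),\pi_{\mathcal{G}^{*}}(\cdot\mid \mathbf{s}'))$, collapses the bound to $\tfrac{1}{1-\gamma}\mathbb{E}_{\mathbf{s}\sim\mathbf{h}_{\pi_{\mathcal{G}^{*}}}}[D_{\mathrm{TV}}(\pi_{\mathcal{G}}(\cdot\mid\mathbf{s}),\pi_{\mathcal{G}^{*}}(\cdot\mid\mathbf{s}))]$.

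In summary, the proof is a verbatim repeat of the Lemma B.2 machinery lifted to state--action space, with the only subtle step being the reduction of $\|(\mathbf{Q}_{\pi_{\mathcal{G}}}-\mathbf{Q}_{\pi_{\mathcal{G}^{*}}})\rho_{\pi_{\mathcal{G}^{*}}}\|_{1}$ to $2\mathbb{E}_{\mathbf{s}\sim\mathbf{h}_{\pi_{\mathcal{G}^{*}}}}[D_{\mathrm{TV}}(\pi_{\mathcal{G}}(\cdot\mid\mathbf{s}),\pi_{\mathcal{G}^{*}}(\cdot\mid\mathbf{s}))]$, which is where the tighter constant comes from. The resulting inequality then feeds directly into Theorem~3 through the value-function TV bound $|V_{\pi_{\mathcal{G}^{*}}}-V_{\pi_{\mathcal{G}}}|\leq \tfrac{2R_{\max}}{1-\gamma}D_{\mathrm{TV}}(\rho_{\pi_{\mathcal{G}}},\rho_{\pi_{\mathcal{G}^{*}}})$, combined with Lemma B.1.
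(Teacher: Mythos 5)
Your first decomposition is exactly the paper's proof: the paper writes $\rho_{\pi}(\mathbf{s},a)=\mathbf{h}_{\pi}(\mathbf{s})\pi(a\mid\mathbf{s})$, splits the difference into a policy-discrepancy term and a state-occupancy-discrepancy term, and bounds the latter by Lemma~\ref{lem:state_discrepancy}. Your worry about the constant is well founded but is resolved more cheaply than you propose: the proof of Lemma~\ref{lem:state_discrepancy} actually establishes the sharper intermediate bound $D_{TV}(\mathbf{h}_{\pi_{\mathcal{G}}},\mathbf{h}_{\pi_{\mathcal{G}^{*}}})\leq\frac{\gamma}{2}\cdot\frac{1}{1-\gamma}\cdot 2\,\mathbb{E}[D_{\mathrm{TV}}(\pi_{\mathcal{G}},\pi_{\mathcal{G}^{*}})]=\frac{\gamma}{1-\gamma}\mathbb{E}[D_{\mathrm{TV}}(\pi_{\mathcal{G}},\pi_{\mathcal{G}^{*}})]$ before loosening $\gamma$ to $1$ in its final statement, and $1+\frac{\gamma}{1-\gamma}=\frac{1}{1-\gamma}$ recovers the claimed constant with no further work. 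So the ``naive'' combination is the intended argument; one just has to cite the $\frac{\gamma}{1-\gamma}$ form that the proof of Lemma~\ref{lem:state_discrepancy} delivers rather than its loosened statement.

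Your alternative operator-level route can also be made to work, but as sketched it has a gap: the initial state--action distribution $\rho_{0}^{\pi}(\mathbf{s},a)=\mathbf{h}_{0}(\mathbf{s})\pi(a\mid\mathbf{s})$ itself depends on the policy, so the resolvent identity applied to $(I-\gamma\mathbf{Q}_{\pi})^{-1}$ does not account for the whole difference $\rho_{\pi_{\mathcal{G}}}-\rho_{\pi_{\mathcal{G}^{*}}}$; you must add the term $(1-\gamma)(I-\gamma\mathbf{Q}_{\pi_{\mathcal{G}}})^{-1}(\rho_{0}^{\pi_{\mathcal{G}}}-\rho_{0}^{\pi_{\mathcal{G}^{*}}})$, which contributes $\mathbb{E}_{\mathbf{s}\sim\mathbf{h}_{0}}[D_{\mathrm{TV}}]$. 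Moreover, the resolvent term lands you on an expectation over the one-step-pushed-forward distribution $\mathbf{P}_{\pi_{\mathcal{G}^{*}}}\mathbf{h}_{\pi_{\mathcal{G}^{*}}}$ rather than $\mathbf{h}_{\pi_{\mathcal{G}^{*}}}$; only after using the occupancy recursion $\mathbf{h}_{\pi_{\mathcal{G}^{*}}}=(1-\gamma)\mathbf{h}_{0}+\gamma\mathbf{P}_{\pi_{\mathcal{G}^{*}}}\mathbf{h}_{\pi_{\mathcal{G}^{*}}}$ do the two pieces reassemble into $\frac{1}{1-\gamma}\mathbb{E}_{\mathbf{s}\sim\mathbf{h}_{\pi_{\mathcal{G}^{*}}}}[D_{\mathrm{TV}}]$. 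That is all correct mathematics, but it is strictly more machinery than the paper needs.
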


\begin{proof}[Proof of Lemma \ref{lem:state_action_discrepancy}]
Note that for any policy $\displaystyle \pi _{\mathcal{G}}$ under any causal graph $\displaystyle \mathcal{G}$, the state-action distribution $\displaystyle \rho _{\pi _{\mathcal{G}}}( \mathbf{s},a) =\pi _{\mathcal{G}} (a\mid \mathbf{s})\mathbf{h}_{\pi _{\mathcal{G}}}(\mathbf{s})$, we have
\begin{equation}
\begin{aligned}
D_{\mathrm{TV}}( \rho _{\pi _{\mathcal{G}}} ,\rho _{\pi _{\mathcal{G}^{*}}}) & =\dfrac{1}{2}\sum _{( \mathbf{s},a)}| [ \pi _{\mathcal{G}^{*}} (a\mid \mathbf{s})-\pi _{\mathcal{G}} (a\mid \mathbf{s})] \mathbf{h}_{\pi _{\mathcal{G}}}(\mathbf{s}) +[ \mathbf{h}_{\pi _{\mathcal{G}^{*}}}(\mathbf{s}) -\mathbf{h}_{\pi _{\mathcal{G}}}(\mathbf{s})] \pi _{\mathcal{G}} (a\mid \mathbf{s})| \\
 & \leq \dfrac{1}{2}\sum _{( \mathbf{s},a)} |\pi _{\mathcal{G}^{*}} (a\mid \mathbf{s})-\pi _{\mathcal{G}} (a\mid \mathbf{s})| \mathbf{h}_{\pi _{\mathcal{G}}}(\mathbf{s}) +\dfrac{1}{2}\sum _{( \mathbf{s},a)} \pi _{\mathcal{G}} (a\mid \mathbf{s})|\mathbf{h}_{\pi _{\mathcal{G}^{*}}}(\mathbf{s}) -\mathbf{h}_{\pi _{\mathcal{G}}}(\mathbf{s}) |\\
 & =\mathbb{E}_{\mathbf{s}\sim \mathbf{h}_{\pi _{\mathcal{G} *}}}[ D_{\mathrm{TV}}( \pi _{\mathcal{G}} (\cdot \mid \mathbf{s}),\pi _{\mathcal{G}^{*}} (\cdot \mid \mathbf{s}))] +D_{TV}( \mathbf{h}_{\pi _{\mathcal{G}}}(\mathbf{s}) ,\mathbf{h}_{\pi _{\mathcal{G}^{*}}}(\mathbf{s}))\\
 & \leq \dfrac{1}{1-\gamma }\mathbb{E}_{\mathbf{s}\sim \mathbf{h}_{\pi _{\mathcal{G} *}}}[ D_{\mathrm{TV}}( \pi _{\mathcal{G}} (\cdot \mid \mathbf{s}),\pi _{\mathcal{G}^{*}} (\cdot \mid \mathbf{s}))],
\end{aligned}
\end{equation}
where the last inequality follows Lemma \ref{lem:state_discrepancy}.
\end{proof}
Based on all the above Lemma \ref{lem:state_action_discrepancy}, we finally give the policy performance guarantee of our proposed framework.
Specifically, we bound the policy value gap (i.e., the difference between the value of learned causal policy and the optimal policy) based on the state-action distribution discrepancy.
\begin{theorem}\label{thm:value_bound_appendix}
Given a causal policy $\pi_{\mathcal{G}^{*}}(\cdot|\mathbf{s})$ under the true causal graph $\mathcal{G}^{*}=\left(V_\mathcal{S},E^{*}\right)$ and a policy $\pi_{\mathcal{G}}(\cdot |\mathbf{s})$ under the causal graph $\mathcal{G} =(V_\mathcal{S},E)$, recalling $R_{\max}$ is the upper bound of the reward function, we have the performance difference of $\pi_{\mathcal{G}^{*}}(\cdot|\mathbf{s})$ and $\pi_{\mathcal{G}}(\cdot|\mathbf{s})$ be bounded as below,
\begin{equation}
\begin{split}
\begin{aligned}
V_{\pi_{\mathcal{G}^{*}}} -V_{\pi_{\mathcal{G}}}  \leq &\frac{R_{\max}}{(1-\gamma )^{2}}(\| M_{\mathbf{s}}(\mathcal{G})-M_{\mathbf{s}}(\mathcal{G}^{*}) \|_{1} \\& + \Vert \mathbf{1}_{\{a:m_{\mathbf{s},a}^{\mathcal{G}^{*}}=1\land m_{\mathbf{s},a}^{\mathcal{G}}=1 \}}\Vert_{1}).
\end{aligned}
\end{split}
\end{equation}
\end{theorem}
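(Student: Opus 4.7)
The plan is to chain together three bounds already set up in the excerpt: first relate the value gap to a total variation distance on state-action occupancy measures, then push this through to the pointwise state-conditional policy TV distance via Lemma \ref{lem:state_action_discrepancy}, and finally invoke Lemma \ref{lem:policy_discrepancy} to translate that policy-level TV distance into the two causal-mask quantities appearing in the statement. The factors of $\frac{1}{1-\gamma}$, $\frac{1}{1-\gamma}$, and $\frac{1}{2}$ from these three steps combine, together with a leading $2 R_{\max}$ from the first step, to produce the claimed constant $R_{\max}/(1-\gamma)^2$.

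Concretely, first I would establish the standard occupancy-measure bound
\begin{equation*}
|V_{\pi_{\mathcal{G}^{*}}} - V_{\pi_{\mathcal{G}}}| \leq \frac{2 R_{\max}}{1-\gamma}\, D_{\mathrm{TV}}\!\left(\rho_{\pi_{\mathcal{G}}},\rho_{\pi_{\mathcal{G}^{*}}}\right),
\end{equation*}
by writing each value function as $V_{\pi} = \frac{1}{1-\gamma}\,\mathbb{E}_{(\mathbf{s},a)\sim \rho_{\pi}}[r(\mathbf{s},a)]$, subtracting the two expressions, applying H\"older's inequality with $\|r\|_{\infty}\leq R_{\max}$, and using the variational characterization of total variation. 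Second, I would substitute Lemma \ref{lem:state_action_discrepancy}, which controls this occupancy-level TV by $\frac{1}{1-\gamma}\,\mathbb{E}_{\mathbf{s}\sim \mathbf{h}_{\pi_{\mathcal{G}^{*}}}}[D_{\mathrm{TV}}(\pi_{\mathcal{G}}(\cdot\mid\mathbf{s}),\pi_{\mathcal{G}^{*}}(\cdot\mid\mathbf{s}))]$. Third, I would plug in Lemma \ref{lem:policy_discrepancy} pointwise in $\mathbf{s}$ to bound the inner $D_{\mathrm{TV}}$ by $\tfrac{1}{2}\bigl(\|M_{\mathbf{s}}(\mathcal{G}) - M_{\mathbf{s}}(\mathcal{G}^{*})\|_{1} + \|\mathbf{1}_{\{a:m_{\mathbf{s},a}^{\mathcal{G}^{*}}=1\land m_{\mathbf{s},a}^{\mathcal{G}}=1\}}\|_{1}\bigr)$. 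Collecting the constants gives $\frac{2R_{\max}}{1-\gamma}\cdot \frac{1}{1-\gamma}\cdot \frac{1}{2} = \frac{R_{\max}}{(1-\gamma)^2}$ in front of the mask-based quantities, and finally I would drop from $|V_{\pi_{\mathcal{G}^{*}}} - V_{\pi_{\mathcal{G}}}|$ to the signed gap $V_{\pi_{\mathcal{G}^{*}}} - V_{\pi_{\mathcal{G}}}$, which is harmless because the right-hand side is nonnegative.

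The main obstacle is really the first step: verifying that the factor of $2 R_{\max}/(1-\gamma)$ is correct, since the factor of $2$ is what cancels against the $\frac{1}{2}$ from Lemma \ref{lem:policy_discrepancy} and lands the clean constant in the theorem. A related bookkeeping subtlety is that the mask-based upper bound from Lemma \ref{lem:policy_discrepancy} is stated at a single state $\mathbf{s}$, whereas Lemma \ref{lem:state_action_discrepancy} produces an expectation over $\mathbf{h}_{\pi_{\mathcal{G}^{*}}}$; the cleanest reading consistent with the statement is to interpret the theorem pointwise in $\mathbf{s}$ (so the expectation trivializes) or, equivalently, to use a worst-case bound over $\mathbf{s}$. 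Everything else is bookkeeping on the three previously established inequalities.
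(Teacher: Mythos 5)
Your proposal matches the paper's proof essentially step for step: the paper likewise writes $V_{\pi_{\mathcal{G}}}=\frac{1}{1-\gamma}\mathbb{E}_{(\mathbf{s},a)\sim\rho_{\pi_{\mathcal{G}}}}[r]$ to get the $\frac{2R_{\max}}{1-\gamma}D_{\mathrm{TV}}(\rho_{\pi_{\mathcal{G}}},\rho_{\pi_{\mathcal{G}^{*}}})$ bound, then chains Lemma \ref{lem:state_action_discrepancy} and Lemma \ref{lem:policy_discrepancy} to collect the constant $\frac{R_{\max}}{(1-\gamma)^{2}}$. The expectation-versus-pointwise subtlety you flag is also present (and silently glossed over) in the paper's own final step, so your reading is as careful as the original.
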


\begin{proof}[Proof of theorem \ref{thm:value_bound_appendix}]

Note that for any policy $\displaystyle \pi _{\mathcal{G}}$ under any causal graph $\displaystyle \mathcal{G}$, its policy value can be reformulated as $\displaystyle V_{\pi _{\mathcal{G}}} =\dfrac{1}{1-\gamma }\mathbb{E}_{( \mathbf{s},a) \sim \rho _{\pi _{\mathcal{G}}}}[ r,a]$. Based on this, we have
\begin{equation}
\begin{aligned}
|V_{\pi _{\mathcal{G}^{*}}} -V_{\pi _{\mathcal{G}}} | & =\left| \dfrac{1}{1-\gamma }\mathbb{E}_{( \mathbf{s},a) \sim \rho _{\pi _{\mathcal{G}}}}[ r,a] -\dfrac{1}{1-\gamma }\mathbb{E}_{( \mathbf{s},a) \sim \rho _{\pi _{\mathcal{G}^{*}}}}[ r,a]\right| \\
 & \leq \dfrac{1}{1-\gamma }\sum _{( \mathbf{s},a) \in \mathcal{S} \times \mathcal{A}}| ( \rho _{\pi _{\mathcal{G}}}( \mathbf{s},a) -\rho _{\pi _{\mathcal{G}^{*}}}( \mathbf{s},a)) r( \mathbf{s},a)| \\
 & \leq \frac{2R_{\max}}{1-\gamma } D_{\mathrm{TV}}( \rho _{\pi _{\mathcal{G}}} ,\rho _{\pi _{\mathcal{G}^{*}}}).
\end{aligned}
\end{equation}
Combining Lemma \ref{lem:state_action_discrepancy} and Lemma \ref{lem:policy_discrepancy}, we have
\begin{equation}
\begin{aligned}
V_{\pi _{\mathcal{G}^{*}}} -V_{\pi _{\mathcal{G}}} & \leq \frac{2R_{\max}}{1-\gamma } D_{\mathrm{TV}}( \rho _{\pi _{\mathcal{G}}} ,\rho _{\pi _{\mathcal{G}^{*}}})\\
 & \leq \frac{2R_{\max}}{(1-\gamma )^{2}}\mathbb{E}_{\mathbf{s}\sim d_{\pi _{\mathcal{G} *}}}[ D_{\mathrm{TV}}( \pi _{\mathcal{G}} (\cdot \mid \mathbf{s}),\pi _{\mathcal{G}^{*}} (\cdot \mid \mathbf{s}))]\\
 & \leq \frac{R_{\max}}{(1-\gamma )^{2}}\left( \| M_{\mathbf{s}}(\mathcal{G}) -M_{\mathbf{s}}\left(\mathcal{G}^{*}\right) \| _{1} +\Vert \mathbf{1}_{\left\{a:m_{\mathbf{s},a}^{\mathcal{G}^{*}} =1\land m_{\mathbf{s},a}^{\mathcal{G}} =1\right\}}\Vert _{1}\right),
\end{aligned}
\end{equation}
which completes the proof.
\end{proof}

\bigskip

\section{Additional experiment of topology-free environment}

\begin{figure}[t]
	\centering
    \subfloat[] {
        \label{fig:policy_reward_notopo}
        \centering
        \includegraphics[width=0.31\linewidth]{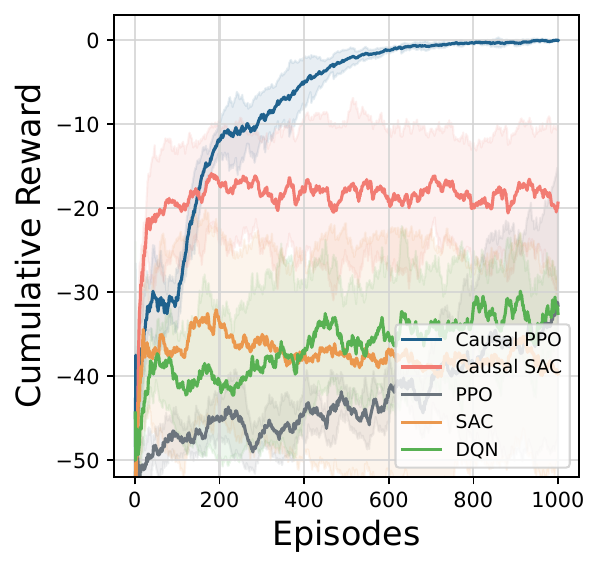}
    }
    \subfloat[] {
        \label{fig:policy_steps_notopo}
        \centering
        \includegraphics[width=0.31\linewidth]{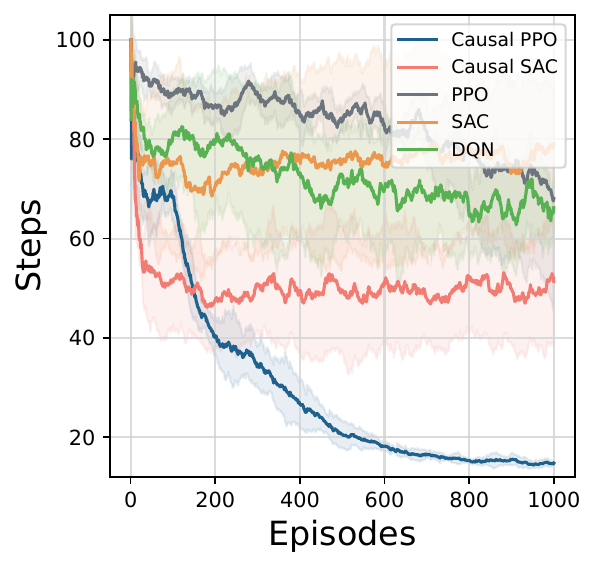}
    }
    \subfloat[] {
        \label{fig:policy_alarms_notopo}
        \centering
        \includegraphics[width=0.31\linewidth]{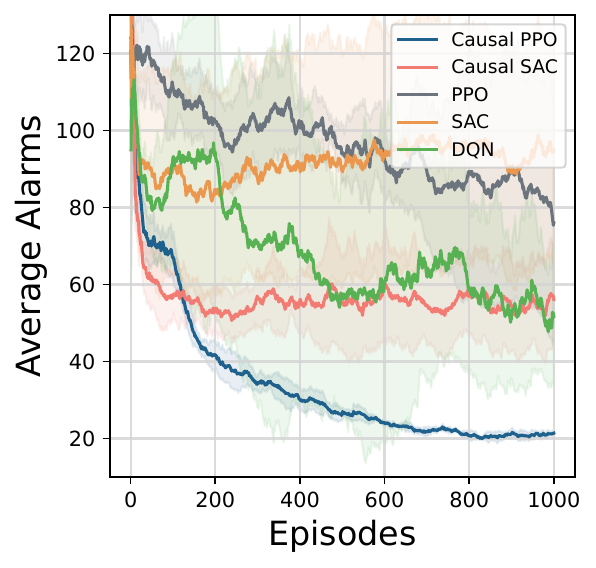}
    }
    \centering
    \caption{(a)-(c)Cumulative rewards, intervention steps, and average number of alarms per episode for Causal PPO based on random initialization structures at different $K$ in the topology-free environment.} 
    \label{fig:exp_policy_result_notopo}
\end{figure}
\begin{table*}
\liuhao
    \centering
    \caption{Results of causal structure learning of topology-free environment}
    
\begin{tabular}{l|c|c|c|c|c}
        \toprule
         Methods & F1 score & Precision & Recall & Accuracy & SHD\\
        \midrule
        Random Initiation & 0.006 $\pm$ 0.006 & 0.025 $\pm$ 0.025 & 0.003 $\pm$ 0.003 & 0.669 $\pm$ 0.983& 169.0 $\pm$ 5.362
        \\
        Causal PPO (Random) & \textbf{0.755} $\pm$ 0.023 & \textbf{0.814} $\pm$ 0.024 & \textbf{0.705} $\pm$ 0.025 & \textbf{0.993} $\pm$ 0.001 & \textbf{68.50} $\pm$ 6.225
        \\
        Causal SAC (Random) & 0.595 $\pm$ 0.027 & 0.558 $\pm$ 0.057 & 0.643 $\pm$ 0.017 & 0.987 $\pm$ 0.002 & 132.0 $\pm$ 15.859
        \\
        \bottomrule
    \end{tabular}
    \label{tab_causal_notopo}
\end{table*}

Considering that topology-free fault alarm scenarios also exist in real O\&M environments, we constructed another topology-free alarm environment with 100-dimensional alarm types based on real alarm data. The specific experimental configurations are shown in the Table~\ref{tab:env_configurations}. We also conducted comparative experiments in this environment. In policy learning, we used the model-free algorithms PPO~\cite{schulman2017proximal}, SAC~\cite{haarnoja2018soft}, and DQN~\cite{mnih2013playing} as baseline, and applied our method to PPO and SAC, resulting in Causal PPO and Causal SAC.
To better demonstrate the advantages of our method in causal structure learning, we use random graphs as the initial structures for the causal learning process.

As shown in Figure~\ref{fig:exp_policy_result_notopo}, our methods outperform the baseline algorithms in terms of cumulative rewards, number of interactions, and average number of alarms per episode metrics.
In terms of structure learning, discovering causality among 100-dimensional causal alarm nodes is challenging. 
However, as shown in Table~\ref{tab_causal_notopo}, compared to the randomized initial graph, our approaches can gradually learn a basic causal structure, which helps improve the convergence performance of the policy.
This also demonstrates the applicability of our algorithm in multiple scenarios.

\section{Additional experiment on cart-pole environment}\label{sec:carpole}

To evaluate the performance of our approach on classic control tasks, we included the \textit{cart-pole} environment from the OpenAI Gym toolkit. The cart-pole environment is a well-known benchmark in reinforcement learning, where the goal is to balance a pole on a moving cart by applying forces to the cart. The state space consists of the cart's position, velocity, pole angle, and pole angular velocity, while the action space is discrete, allowing the agent to push the cart either left or right.

In the cart-pole environment, there is a clear causal relationship between the pole’s angle and the cart’s acceleration: when the pole tilts to the right, continuing to apply force in that direction exacerbates the tilt, whereas applying force to the left helps restore balance. Leveraging this causal structure, we introduce a causal action masking mechanism that softly masks actions aligned with the tilt direction at extreme angles, thereby reducing ineffective exploration and expediting policy convergence. Specifically, since the goal $Y$ of cart-pole environment is to control the angle of pole, the causal mask is learned by setting it proportionally to the effect of the action $m_{\mathbf{s},i}^{\mathcal{G}} \propto |s_{\text{angle}}(I_i=1)|$ such that the action will more likely be masked if it increases the angle.

The experimental results (shown in Figure~\ref{fig:cartpole_reward}.) indicate that the proposed Causal PPO significantly outperforms other baselines in terms of cumulative rewards, and demonstrates faster convergence and higher stability during training, which fully proves that explicitly embedding causal inference in the action space is of key significance for efficient reinforcement learning of samples.

\section{Hyper-parameters}\label{sec:hyper-parameters}
We list all important hyper-parameters in the implementation for the FaultAlarmRL environment in Table~\ref{tab:algo_parameters}.
\begin{figure}[t]
    \centering
    \begin{minipage}[t]{0.48\linewidth}
        \vspace{0pt}
        \centering
        \includegraphics[width=\linewidth]{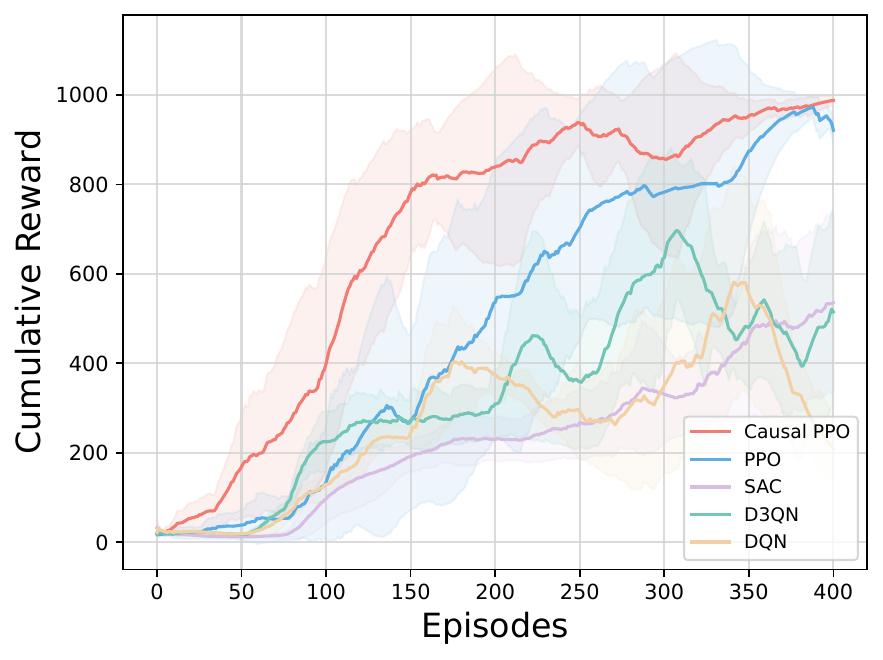}
        \caption{Cumulative rewards in the cart-pole environment.}
        \label{fig:cartpole_reward}
    \end{minipage}
    \hfill
    \begin{minipage}[t]{0.48\linewidth}
        \centering
     \vspace{0pt}
        \liuhao
        \renewcommand\arraystretch{1.1}
        \captionof{table}{Environment configurations used in experiments.}
        \label{tab:env_configurations}
        \begin{tabular}{l|c|c}
            \toprule
            \multirow{1}{*}{Environment} & \multirow{1}{*}{Parameters} & \multirow{1}{*}{Value} \\
            \midrule
            \multirow{10}{*}{\shortstack[c]{Topology\\environment}} 
            & Max step size          & 100 \\
            & State dimension        & 1800 \\
            & Action dimension       & 900  \\
            & Action type            & Discrete  \\
            & time range             & 50  \\
            & max hop                & 2 \\
            & $\alpha$ range         & [0.0001, 0.0013]  \\
            & $\mu$ range            & [0.0005, 0.0008]  \\
            & root cause num         & 50 \\
            \midrule
            \multirow{10}{*}{\shortstack[c]{Topology-free\\environment}} 
            & Max step size          & 100 \\
            & State dimension        & 200 \\
            & Action dimension       & 100  \\
            & Action type            & Discrete  \\
            & time range             & 100  \\
            & max hop                & 1 \\
            & $\alpha$ range         & [0.00015, 0.0025]  \\
            & $\mu$ range            & [0.0005, 0.0008]  \\
            & root cause num         & 20 \\
            \bottomrule
        \end{tabular}
    \end{minipage}
\end{figure}

\begin{table*}[t]
\liuhao
  \renewcommand\arraystretch{1.0}
  \centering
  \caption{Ground truth}
  \label{tab:ground_truth}
     \begin{tabular}{cc|cc}
    \toprule
    Cause & Effect & Cause & Effect \\
    \midrule
    MW\_RDI & LTI & MW\_BER\_SD & LTI \\
    MW\_RDI & CLK\_NO\_TRACE\_MODE & MW\_BER\_SD & S1\_SYN\_CHANGE \\
    MW\_RDI & S1\_SYN\_CHANGE & MW\_BER\_SD & PLA\_MEMBER\_DOWN \\
    MW\_RDI & LAG\_MEMBER\_DOWN & MW\_BER\_SD & MW\_RDI \\
    MW\_RDI & PLA\_MEMBER\_DOWN & MW\_BER\_SD & MW\_LOF \\
    MW\_RDI & ETH\_LOS & MW\_BER\_SD & ETH\_LINK\_DOWN \\
    MW\_RDI & ETH\_LINK\_DOWN & MW\_BER\_SD & NE\_COMMU\_BREAK \\
    MW\_RDI & NE\_COMMU\_BREAK & MW\_BER\_SD & R\_LOF \\
    MW\_RDI & R\_LOF & R\_LOF & LTI \\
    TU\_AIS & LTI   & R\_LOF & S1\_SYN\_CHANGE \\
    TU\_AIS & CLK\_NO\_TRACE\_MODE & R\_LOF & LAG\_MEMBER\_DOWN \\
    TU\_AIS & S1\_SYN\_CHANGE & R\_LOF & PLA\_MEMBER\_DOWN \\
    RADIO\_RSL\_LOW & LTI   & R\_LOF & ETH\_LINK\_DOWN \\
    RADIO\_RSL\_LOW & S1\_SYN\_CHANGE & R\_LOF & NE\_COMMU\_BREAK \\
    RADIO\_RSL\_LOW & LAG\_MEMBER\_DOWN & LTI   & CLK\_NO\_TRACE\_MODE \\
    RADIO\_RSL\_LOW & PLA\_MEMBER\_DOWN & HARD\_BAD & LTI \\
    RADIO\_RSL\_LOW & MW\_RDI & HARD\_BAD & CLK\_NO\_TRACE\_MODE \\
    RADIO\_RSL\_LOW & MW\_LOF & HARD\_BAD & S1\_SYN\_CHANGE \\
    RADIO\_RSL\_LOW & MW\_BER\_SD & HARD\_BAD & BD\_STATUS \\
    RADIO\_RSL\_LOW & ETH\_LINK\_DOWN & HARD\_BAD & POWER\_ALM \\
    RADIO\_RSL\_LOW & NE\_COMMU\_BREAK & HARD\_BAD & LAG\_MEMBER\_DOWN \\
    RADIO\_RSL\_LOW & R\_LOF & HARD\_BAD & PLA\_MEMBER\_DOWN \\
    BD\_STATUS & S1\_SYN\_CHANGE & HARD\_BAD & ETH\_LOS \\
    BD\_STATUS & LAG\_MEMBER\_DOWN & HARD\_BAD & MW\_RDI \\
    BD\_STATUS & PLA\_MEMBER\_DOWN & HARD\_BAD & MW\_LOF \\
    BD\_STATUS & ETH\_LOS & HARD\_BAD & ETH\_LINK\_DOWN \\
    BD\_STATUS & MW\_RDI & HARD\_BAD & NE\_COMMU\_BREAK \\
    BD\_STATUS & MW\_LOF & HARD\_BAD & R\_LOF \\
    BD\_STATUS & ETH\_LINK\_DOWN & HARD\_BAD & NE\_NOT\_LOGIN \\
    BD\_STATUS & RADIO\_RSL\_LOW & HARD\_BAD & RADIO\_RSL\_LOW \\
    BD\_STATUS & TU\_AIS & HARD\_BAD & TU\_AIS \\
    NE\_COMMU\_BREAK & LTI   & ETH\_LOS & LTI \\
    NE\_COMMU\_BREAK & CLK\_NO\_TRACE\_MODE & ETH\_LOS & CLK\_NO\_TRACE\_MODE \\
    NE\_COMMU\_BREAK & S1\_SYN\_CHANGE & ETH\_LOS & S1\_SYN\_CHANGE \\
    NE\_COMMU\_BREAK & LAG\_MEMBER\_DOWN & ETH\_LOS & LAG\_MEMBER\_DOWN \\
    NE\_COMMU\_BREAK & PLA\_MEMBER\_DOWN & ETH\_LOS & PLA\_MEMBER\_DOWN \\
    NE\_COMMU\_BREAK & ETH\_LOS & ETH\_LOS & ETH\_LINK\_DOWN \\
    NE\_COMMU\_BREAK & ETH\_LINK\_DOWN & MW\_LOF & LTI \\
    NE\_COMMU\_BREAK & NE\_NOT\_LOGIN & MW\_LOF & CLK\_NO\_TRACE\_MODE \\
    ETH\_LINK\_DOWN & LTI   & MW\_LOF & S1\_SYN\_CHANGE \\
    ETH\_LINK\_DOWN & CLK\_NO\_TRACE\_MODE & MW\_LOF & LAG\_MEMBER\_DOWN \\
    ETH\_LINK\_DOWN & S1\_SYN\_CHANGE & MW\_LOF & PLA\_MEMBER\_DOWN \\
    S1\_SYN\_CHANGE & LTI   & MW\_LOF & ETH\_LOS \\
    POWER\_ALM & BD\_STATUS & MW\_LOF & MW\_RDI \\
    POWER\_ALM & ETH\_LOS & MW\_LOF & ETH\_LINK\_DOWN \\
    POWER\_ALM & MW\_RDI & MW\_LOF & NE\_COMMU\_BREAK \\
    POWER\_ALM & MW\_LOF & MW\_LOF & R\_LOF \\
    \bottomrule
    \end{tabular}
\end{table*}

\begin{table*}[t]
\liuhao
\caption{Hyper-parameters of methods used in experiments.}
\label{tab:algo_parameters}
\centering
   \begin{tabular}{c|c|c}
    \toprule
    \multirow{1}{*}{Models} & \multirow{1}{*}{Parameters} & \multirow{1}{*}{Value} \\
    \midrule
    \multirow{10}{*}{Causal DQN \& Causal D3QN} 
    & Learning rate               & 0.0003  \\
    & Size of buffer $mathcal{B}$        & 100000 \\
    & Epoch per max iteration     & 100    \\
    & Batch size                  & 64     \\
    & Reward discount $\gamma$    & 0.99   \\
    & MLP hiddens                 & 128    \\
    & MLP layers                  & 2      \\
    & Update timestep             & 5      \\
    & Random sample timestep      & 512    \\
    & $\epsilon$-greedy ratio     & 0.1    \\
    & $\epsilon$-causal ratio $\eta$     & 0.2   \\
    \midrule
    \multirow{10}{*}{Causal PPO} 
    & Actor learning rate               & 0.0003  \\
    & Critic learning rate               & 0.0003  \\
    & Epoch per max iteration     & 100    \\
    & Batch size                  & 64     \\
    & Reward discount $\gamma$    & 0.99   \\
    & MLP hiddens                 & 128    \\
    & MLP layers                  & 2      \\
    & Clip                        & 0.2    \\
    & K epochs                    & 50     \\
    & Update timestep             & 256    \\
    & Random sample timestep      & 512    \\
    & $\epsilon$-greedy ratio     & 0.1    \\
    & $\epsilon$-causal ratio $\eta$     & 0.3   \\
    \midrule
    \multirow{10}{*}{DQN \& D3QN} 
    & Learning rate               & 0.0003  \\
    & Size of buffer $mathcal{B}$        & 100000 \\
    & Epoch per max iteration     & 100    \\
    & Batch size                  & 64     \\
    & Reward discount $\gamma$    & 0.99   \\
    & MLP hiddens                 & 128    \\
    & MLP layers                  & 2      \\
    & Update timestep             & 5      \\
    & Random sample timestep      & 512    \\
    & $\epsilon$-greedy ratio     & 0.1    \\
    \midrule
    \multirow{10}{*}{PPO} 
    & Actor learning rate               & 0.0003  \\
    & Critic learning rate               & 0.0003  \\
    & Epoch per max iteration     & 100    \\
    & Batch size                  & 64     \\
    & Reward discount $\gamma$    & 0.99   \\
    & MLP hiddens                 & 128    \\
    & MLP layers                  & 2      \\
    & Clip                        & 0.2    \\
    & K epochs                    & 50     \\
    & Update timestep             & 512    \\
    & Random sample timestep      & 512    \\
    \bottomrule
  \end{tabular}
\end{table*}

\end{appendix}

\end{document}